    \pgfplotsset{compat=1.14}
\newcommand{\cF}{\mathcal{F}}
\newcommand{\cO}{\mathcal{O}}
\newcommand{\cR}{\mathcal{R}}
\newcommand{\cU}{\mathcal{U}}
\newcommand{\cX}{\mathcal{X}}
\newcommand{\E}{\mathbb{E}}
\newcommand{\I}{\mathbb{I}}
\newcommand{\N}{\mathbb{N}}
\newcommand{\Pb}{\mathbb{P}}
\newcommand{\R}{\mathbb{R}}
\newcommand{\Z}{\mathbb{Z}}
\newcommand{\fhi}{\varphi}
\newcommand{\lrb}[1]{\left(#1\right)}
\newcommand{\brb}[1]{\bigl(#1\bigr)}
\newcommand{\lsb}[1]{\left[#1\right]}
\newcommand{\bsb}[1]{\bigl[#1\bigr]}
\newcommand{\lcb}[1]{\left\{#1\right\}}
\newcommand{\bcb}[1]{\bigl\{#1\bigr\}}
\newcommand{\lce}[1]{\left\lceil#1\right\rceil}
\newcommand{\lfl}[1]{\left\lfloor#1\right\rfloor}
\newcommand{\bfl}[1]{\bigl\lfloor#1\bigr\rfloor}
\newcommand{\lno}[1]{\left\lVert#1\right\rVert}
\newcommand{\bno}[1]{\bigl\lVert#1\bigr\rVert}
\DeclareMathOperator*{\argmin}{argmin}
\newcommand{\dif}{\,\mathrm{d}}
\newcommand{\fracc}[2]{#1/#2}
\newcommand{\s}{\subset}
\newcommand{\iop}{\infty}
\newcommand{\bb}{\boldsymbol{B}}
\newcommand{\bB}{\boldsymbol{B}} 
\newcommand{\be}{\boldsymbol{e}}
\newcommand{\bp}{\boldsymbol{p}}
\newcommand{\bq}{\boldsymbol{q}}
\newcommand{\bw}{\boldsymbol{w}}
\newcommand{\bu}{\boldsymbol{u}}
\newcommand{\lcirc}{\ell^{\circ}}
\newcommand{\lcomp}{\lcirc}
\newcommand{\algoname}{Composite Loss Wrapper}
\newcommand{\uno}{(\mathrm{I})}
\newcommand{\due}{(\mathrm{II})}
\newcommand{\is}{i^\star}
\newcommand{\Lhat}{\widehat L}
\newcommand{\lhat}{\widehat \ell}
\newcommand{\loss}{\ell}
\newcommand{\colowr}{CoLoWrapper}
\newcommand{\Rlin}{R^{\mathrm{lin}}}
\newcommand{\bloss}{\boldsymbol{\ell}}
\newcommand{\cRlin}{\cR^{\mathrm{lin}}}
\renewcommand{\alpha}{A}
\DeclareSymbolFont{extraup}{U}{zavm}{m}{n}
\DeclareMathSymbol{\clubsuit}{\mathalpha}{extraup}{84}
\DeclareMathSymbol{\spadesuit}{\mathalpha}{extraup}{81}
\DeclareMathSymbol{\varheartsuit}{\mathalpha}{extraup}{86}
\DeclareMathSymbol{\vardiamondsuit}{\mathalpha}{extraup}{87}
\begin{document}

\title{Nonstochastic Bandits with Composite Anonymous Feedback\thanks{A preliminary version of this paper appeared in the Proceedings of the 31st Conference on Learning Theory (COLT). PMLR 75:750-773, 2018.
}}

\author{\name Nicol\`o Cesa-Bianchi \email nicolo.cesa-bianchi@unimi.it \\
       \addr Dept.\ of Computer Science and DSRC, Universit\`a degli Studi di Milano, Italy
       \AND
       \name Tommaso Cesari \email tommaso.cesari@tse-fr.eu \\
       \addr Dept.\ of Computer Science and DSRC, Universit\`a degli Studi di Milano, Italy \\
       Institut de Math\'ematiques de Toulouse (IMT), Paul Sabatier University (UT3), Toulouse, France \\
       Toulouse School of Economics (TSE), Toulouse, France
       \AND
       \name Roberto Colomboni \email roberto.colomboni@unimi.it \\
       \addr Istituto Italiano di Tecnologia (IIT), Genova, Italy\\
       Universit\`a degli Studi di Milano, Italy
       \AND
       \name Claudio Gentile \email cgentile@google.com \\
       \addr Google Research, New York, USA
       \AND
       \name Yishay Mansour \email mansour@tau.ac.il \\
       \addr Tel Aviv University, Tel Aviv, Israel \\
       Google research, Tel Aviv, Israel
       }

\editor{Alexandre Proutiere}

\maketitle

\begin{abstract}%
We investigate a nonstochastic bandit setting in which the loss of an action is not immediately charged to the player, but rather spread over the subsequent rounds in an adversarial way.
The instantaneous loss observed by the player at the end of each round is then a sum of many loss components of previously played actions.
This setting encompasses as a special case the easier task of bandits with delayed feedback, a well-studied framework where the player observes the delayed losses individually.

Our first contribution is a general reduction transforming a standard bandit algorithm into one that can operate in the harder setting: We bound the regret of the transformed algorithm in terms of the stability and regret of the original algorithm. Then, we show that the transformation of a suitably tuned FTRL with Tsallis entropy has a regret of order $\sqrt{(d+1)KT}$, where $d$ is the maximum delay, $K$ is the number of arms, and $T$ is the time horizon.
Finally, we show that our results cannot be improved in general by exhibiting a matching (up to a log factor) lower bound on the regret of any algorithm operating in this setting.
\end{abstract}

\begin{keywords}
Multi-armed bandits, non-stochastic losses, composite losses, delayed feedback, online learning, 
\end{keywords}

\section{Introduction}\label{s:intro}
Multiarmed bandits, originally proposed for managing clinical trials, are now routinely applied to a variety of other tasks, including computational advertising, e-commerce, and beyond. Typical examples of e-commerce applications include content recommendation systems, like the recommendation of products to visitors of merchant websites and social media platforms. A common pattern in these applications is that the response elicited in a user by the recommendation system is typically not instantaneous, and might occur some time in the future, well after the recommendation was issued. This delay, which might depend on several unknown factors, implies that the reward obtained by the recommender at time $t$ can actually be seen as the combined effect of many previous recommendations to that user.

The more specific scenario of bandits with delayed rewards has been investigated in the literature under the assumption that the contributions of past recommendations to the combined reward is individually discernible ---see, e.g., \citep{neu2010online,joulani2013online,cgmm16,VernadeCP17}. 
\citet{pike2018bandits} revisited the problem of bandits with delayed feedback under the more realistic assumption that only the combined reward is available to the system, while the individual reward components remain unknown. This model captures a much broader range of practical settings where bandits are successfully deployed. Consider for example an advertising campaign which is spread across several channels simultaneously (e.g., radio, tv, web, social media). A well-known problem faced by the campaign manager is to disentangle the contribution of individual ads deployed in each channel from the overall change in sales.
\citet{pike2018bandits} formalized this harder delayed setting in a bandit framework with stochastic rewards, where they introduced the notion of \textsl{delayed anonymous feedback} to emphasize the fact that the reward received at any point in time is the sum of rewards of an unknown subset of past selected actions.
More specifically, choosing action $I_t \in [K]$ at time $t$ generates a stochastic reward $Y_t(I_t) \in [0,1]$ and a stochastic delay $\tau_t \in \{0,1,\dots\}$, where $\bcb{ Y_t(i), \tau_t }_{i\in [K],t\in \N}$ is a family of independent random variables such that $Y_1(i),Y_2(i),\dots$ have a common distribution $\nu_Y(i)$ (for all arms  $i\in[K]$) and $\tau_1,\tau_2,\dots$ have a common distribution $\nu_\tau$ with expectation $\mu_\tau$.
The delayed anonymous feedback assumption entails that the reward observed at time $t$ by the algorithm is the sum of $t$ components of the form $Y_s(I_s)\I\{\tau_s = t-s\}$ for $s\in \{1,\dots,t\}$.
The main result in \citep{pike2018bandits} is that, when the expected delay $\mu_{\tau}$ is known, the regret is at most of order of
$
    K\big((\ln T)/\Delta + \mu_{\tau}\big)
$,
where $\Delta$ is the suboptimality gap.
This bound is of the same order as the corresponding bound for the setting where the feedback is stochastically delayed, but not anonymous \citep{joulani2013online}, and cannot be improved in general.

In this work, we study a bandit setting similar to delayed anonymous feedback, but with two important differences. First, we work in a nonstochastic bandit setting, where rewards (or losses, in our case) are generated by some unspecified deterministic mechanism.
Second, we relax the assumption that the loss of an action is charged to the player at a single instant in the future. More precisely, we assume that the loss for choosing an action at time $t$ is adversarially spread over at most $d+1$ consecutive time steps $t,t+1,\dots,t+d$.
Hence, the loss observed by the player at time $t$ is a \textsl{composite loss}, that is, the sum of $(d+1)$-many loss components $\loss_{t}^{(0)}(I_{t}),\loss_{t-1}^{(1)}(I_{t-1}),\dots,\loss_{t-d}^{(d)}(I_{t-d})$, where $\loss_{t-s}^{(s)}(I_{t-s})$ defines the $s$-th loss component from the selection of action $I_{t-s}$ at time $t-s$. 
Note that in the special case when $\loss_t^{(s)}(i) = 0$ for all $s \neq d_t$, and $\loss_t^{(d_t)}(i) = \loss_t(i)$, we recover the model of nonstochastic bandits with delays $d_1, d_2,\dots \le d$ (which, in particular, reduces to the standard nonstochastic bandits when $d=0$). Our setting, which we call \textsl{composite anonymous feedback}, can accomodate scenarios where actions have a lasting effect which combines additively over time. Online businesses provide several use cases for this setting. For instance, an impression that results in an immediate clickthrough, later followed by a conversion, or a user that interacts with a recommended item ---such as media content--- multiple times over several days, or the free credit assigned to a user of a gambling platform which might not be used all at once.

Our main contribution is a general reduction technique (\algoname{}, or \colowr{}, Algorithm~\ref{a:wrapper}) turning a base nonstochastic bandit algorithm into one operating within the composite anonymous feedback setting. 
We then show that the regret of \colowr{} can be upper bounded in terms of the stability and the regret of the base algorithm (\Cref{t:regret-CoLoWr}).
Choosing as a base algorithm Follow the Regularized Leader (FTRL) with Tsallis entropy, \Cref{t:regret-CoLoWr} gives an upper bound of order $\sqrt{(d+1)KT}$ on the regret of nonstochastic bandits with composite anonymous feedback (Corollary~\ref{c:main-result}), where $d\ge 0$ is a known upper bound on the delay, $K$ is the number of actions, and $T$ is the time horizon. 
This result relies on a nontrivial stability analysis of FTRL with Tsallis entropy that could be of independent interest (\Cref{t:stability}).
Finally, we show the optimality of the $\sqrt{(d+1)KT}$ rate by proving a matching lower bound (up to a logarithmic factor, \Cref{t:lower}).
In particular, this shows that, in the nonstochastic case with delay $d$, anonymous feedback is strictly harder than nonanonymous feedback, whose minimax regret was characterized by \citet{cgmm16} as $\sqrt{(d+K)T}$. 
See the table below for a summary of results for nonstochastic $K$-armed bandits (all rates are optimal ignoring logarithmic factors).
\begin{center}
\begin{tabular}{|c|c|c|}
\hline
\textsc{no delay} & \textsc{delayed feedback} & \textsc{anonymous composite feedback}
\\ \hline\hline
\rule{0pt}{3ex} $\sqrt{KT}$ & $\sqrt{(d+K)T}$ & $\sqrt{(d+1)KT}$
\\
\citep{AuerCeFrSc02} & \citep{cgmm16} & (this paper)
\\ \hline
\end{tabular}
\end{center}
We now give an idea of the proof techniques.
Similar to \citep{pike2018bandits}, we play the same action for a block of at least $2d+1$ time steps, hence the feedback we get in the last $d+1$ steps contains only loss components pertaining to the same action, so that we can estimate in those steps the ``true loss'' of that action. Unfortunately, although the original losses are in $[0,1]$, the composite losses can be as large as $d+1$ (a composite loss sums $d+1$ loss components, and each component can be as large as $1$). 
This causes a corresponding scaling in the regret, compromising optimality. 
However, we observe that the total composite loss relative to the same action over any $d+1$ consecutive steps can be at most $2d+1$ (Lemma~\ref{l:lemmino}). 
Hence, we can normalize the total composite loss relative to the same action over $d+1$ consecutive steps, simply dividing by $2d+1$, obtaining an average loss in the range $[0,1]$. 
This idea leads to the right dependence on $d$ in the regret. 
The last problem is how to avoid suffering a big regret in the first $d$ steps of each block, where the composite losses mix loss components belonging to more than one action. 
We solve this issue by borrowing an idea of \citet{dhk14}.
We build blocks with random endpoints so that their length is (always) at least $2d+1$ and (on average) not much bigger.
This random positioning and length of the blocks is the key to prevent the oblivious adversary from causing a large regret in the first half of each block. 
Moreover, as we prove in \Cref{t:regret-CoLoWr}, if the distribution over actions maintained by the base algorithm is \textsl{stable} (Definition~\ref{d:stable}), then the algorithm is not significantly affected by the uncertainty in the positioning of the blocks.
Extending our results to the case where $d$ is unknown, \cite{wang2021adaptive} show a regret bound of order $T^{2/3}$.
When $d$ is known, however, their analysis does not guarantee our faster $\sqrt{T}$ rate.

\paragraph{Further related work}
Online learning with delayed feedback was studied in the full
information (non-bandit) setting by
\citet{weinberger2002delayed,mesterharm2005line,langford2009slow,joulani2013online,quanrud2015online,khashabi2016adversarial,joulani2016delay,garrabrant2016asymptotic,mann2019learning},
see also \citep{shamir2017online} for an interesting variant. The
bandit setting with delay was investigated in
\citep{neu2010online,joulani2013online,mandel2015queue,cgmm16,VernadeCP17,pike2018bandits,li2019bandit,thune2019nonstochastic,zimmert2020optimal,vernade2020linear,ito2020delay,gael2020stochastic,agrawal2020learning}.
Our delayed composite loss function generalizes the composite loss
function setting of \citet{ddkp14}---see the discussion at the end
of \Cref{s:prel} for details---and is also related to the
notion of loss functions with memory. This latter setting has been
investigated, e.g., by \cite{adt12}, who showed how to turn an
online algorithm with regret guarantee of $\cO(T^q)$ into one
attaining $\cO(T^{ 1/(2-q)})$-policy regret, also adopting a
blocking scheme. A more recent paper in this direction is
\citep{ahm15}, where the authors considered a more general loss
framework than ours, though with the benefit of counterfactual
feedback, in that the algorithm is aware of the loss it would incur
had it played any sequence of $d$ decisions in the previous $d$
rounds, thereby making their results incomparable to ours.

\section{Preliminaries}
\label{s:prel}

We denote the set 
of positive integers by $\N$ and the set of integers by $\Z$.
For all $n \in \N$ we denote the set $\{1,\dots,n\}$ of the first $n$ integers by $[n]$.
We will use the handy convention that, if $(c_t)_{t\in \Z} \s \R$ and $m,n\in\Z$ are such that $m>n$, then $\sum_{t=m}^n c_t = 0$ and $\prod_{t=m}^n c_t = 1$.
For any $x\in \R$, we denote its positive part $\max\{x, 0\}$ by $x^+$.

We start by considering a nonstochastic multiarmed bandit problem on
$K$ actions with oblivious losses in which the loss $\loss_t(i) \in
[0,1]$ at time $t$ of an action $i \in [K]$ is defined
by the sum
\[
  \loss_t(i) \coloneqq \sum_{s=0}^{d} \loss_t^{(s)}(i)
\]
of $(d+1)$-many components $\loss_t^{(s)}(i) \ge 0$ for $s \in\{0,\dots,d\}$.
Let $I_t$ denote the action chosen by the player at the
beginning of round $t$. If $I_t = i$, then the player incurs loss
$\loss_t^{(0)}(i)$ at time $t$, loss $\loss_t^{(1)}(i)$ at time $t+1$,
and so on until time $t+d$.
Yet, what the player observes at time $t$ is only the combined loss incurred at time $t$, which is the sum
\[
\loss_{t}^{(0)}(I_{t}) + \loss_{t-1}^{(1)}(I_{t-1}) + \cdots + \loss_{t-d}^{(d)}(I_{t-d})
\]
of the past $d+1$ loss contributions, where $\loss_t^{(s)}(i) = 0$ for all $i$ and $s$ when $t \le 0$. 
Then, we define the $d$-delayed \textsl{composite} loss at time $t$ of a sequence of $d+1$ actions $i_{t-d},\dots,i_t\in[K]$ as
\begin{equation}\label{e:mixedloss}
    \lcirc_t(i_{t-d},\dots,i_t)
\coloneqq
    \sum_{s=0}^d \ell_{t-s}^{(s)}(i_{t-s}) \;.
\end{equation}
With this notation, the $d$-delayed composite anonymous feedback assumption states that what the player observes at the end of each round $t$ is only the composite loss
$
\lcomp_t(I_{t-d},\dots,I_t)
$.
The goal of the algorithm is to bound its regret $R_T$ against the best fixed action in hindsight,
\[
    R_T 
\coloneqq
    \E \lsb{ \sum_{t=1}^T \lcirc_t(I_{t-d},\dots,I_t) } - \min_{i\in [K]} \sum_{t=1}^T \lcirc_t(i,\dots,i)~.
\]
We define the regret in terms of the composite losses $\lcomp_t$ rather than the true losses $\loss_t$ because in our model $\lcomp_t$ is what the algorithm pays overall on round $t$. It is easy to see that a bound on $R_T$ implies a bound on the more standard notion of regret $\E\left[\sum_{t=1}^T \loss_t(I_t)\right] - \min_{k}\sum_{t=1}^T \loss_t(k)$ up to an additive term of at most $\cO(d)$.

Our setting generalizes the composite loss function setting of \citet{ddkp14}.
Specifically, the linear composite loss function therein can be seen as a
special case of the composite loss~(\ref{e:mixedloss}) once we remove
the superscripts $s$ from the loss function components. In fact, in the linear case,
the feedback in \citep{ddkp14} allows one to easily reconstruct each individual
loss component in a recursive manner. This is clearly impossible in our more
involved scenario, where the new loss components that are observed in round $t$ need
not have occurred in past rounds.

\section{The \colowr{} Algorithm}
\label{s:wrapper}

Our Composite Loss Wrapper algorithm (Algorithm~\ref{a:wrapper}) takes as input a standard $K$-armed bandit algorithm $\alpha$ and a Boolean sequence $\bb$.
The base algorithm $\alpha$ operates on standard (noncomposite) losses with values in $[0,1]$,
producing probability distributions $\bq_1,\bq_2,\dots$ over the action set $[K]$. 
The wrapper calls the base algorithm $\alpha$ only in a subset of rounds determined by the Boolean sequence $\bb$, which we call update rounds.

\begin{definition}[Update round]
\label{d:update-rounds}
We say that $t\in \N$ is an \textsl{update round} with respect to a Boolean sequence $\bb = (b_t)_{t\in \N} \s \{0,1\}^{\N}$ if $t\ge 2d+1$ and $b_t \prod_{s=1}^{2d}(1-b_{t-s}) = 1$.
\end{definition}
Note that if $d>0$, the condition is equivalent to $b_t=1$, and $b_{t-1}=\ldots=b_{t-2d}=0$.
If $d=0$, by our convention, the condition is equivalent to $b_t=1$.

To help understand our algorithm, we will also define two other types of rounds.
We say that $t$ is a \textsl{draw round} if $t=1$ or the previous round $t-1$ was an update round.
If $t$ is not a draw round, we say that it is a \textsl{stay round}.
Note that, if $d=0$, both draw and stay rounds can be update rounds, while if $d\ge 1$, only stay rounds can be update rounds.
\begin{figure}
    \centering

    \begin{tikzpicture}[scale = 0.55]
    \foreach \x in {7.6, 17.6, 22.6} {\fill[yellow] ({\x},1.5) -- ({\x},-0.5) -- ({\x-1.2},-0.5) -- ({\x-1.2},0.5) -- ({\x-5.2},0.5) -- ({\x-5.2},1.5) -- cycle;}
    \draw[orange, thick] (7.6,1.5) -- (7.6,-0.5) (17.6,1.5) -- (17.6,-0.5);
        \foreach \x in {3.6, 12.6, 20.1} {\draw[orange] (\x, -0.5) node[below] {$\ge 2d+1$}; }
    \draw[orange, thick] (-0.4,-0.5) rectangle (22.6,1.5);
    \draw (-1,1) node[left] {$\bB$} (-1,0) node[left] {round};
    \def\Bern{{1,0,1,0,0,0,0,1,0,0,1,0,0,0,0,0,0,1,0,0,0,0,1}}
    \def\Roun{{"D","S","S","S","S","S","S","SU","D","S","S","S","S","S","S","S","S","SU","D","S","S","S","SU"}}
    \foreach \t in  {0,...,22}{
        \pgfmathparse{\Bern[\t]} 
        \node[label = center: $\pgfmathresult$] at (\t,1) {};
        \pgfmathparse{\Roun[\t]}
        \node[label = center: $\pgfmathresult$] at (\t,0) {};
    }
\end{tikzpicture}

    \caption{Sequence of rounds the algorithm is undergoing when $d=2$. 
    The top line contains the values of the Boolean sequence $\bb = (B_t)_{t\in \N}$.
    The bottom line shows the corresponding types of rounds: 
    each block begins with a ($D$)raw round,
    followed by a variable number of ($S$)tay rounds, the last of which is also an ($U$)pdate round.
    Since a round $t$ is an update round only if $B_t = 1$ and $B_s=0$ for the $2d$ previous rounds $s$, the length of each block is at least $2d+1$.
    }
    \label{f:blocks}
\end{figure}

The \colowr{} algorithm proceeds in blocks of (random) length of at least $2d+1$ rounds in which it constantly plays the same action (\Cref{f:blocks}).
Blocks in Algorithm \ref{a:wrapper} are counted by variable $n_t$.
Each block $n_t$ consists of a draw round followed by ($2d$ or more) 
stay rounds, with the last round of the block being also an update round.
During a draw round $t$, \colowr{} uses its current distribution $\bp_t$ to draw and play an action $I_t$.
During stay rounds, it keeps playing the action that was drawn during the latest draw round.
After playing the action $I_t$ for the current round $t$, if $t$ is an update round, \colowr{} asks the base algorithm $\alpha$ to make an update of its base distribution $\bq_{n_t} \to \bq_{n_t+1}$ as if $\alpha$ played action $I_t$ and observed as the loss of $I_t$ the quantity $\frac{1}{2d+1} \sum_{s = t-d}^t \lcirc_s (I_{s-d},\dots,I_s)$.
Then, the block ends and the distribution of \colowr{} at the beginning of the next block $n_{t+1}=n_t+1$ is $\bp_{t+1}=\bq_{n_t+1}$.

Note that if $t$ is an update round, the quantity $\frac{1}{2d+1} \sum_{s = t-d}^t \lcirc_s (I_{s-d},\dots,I_s)$ that is fed back to $\alpha$ relates only to the current action $I_t$, because blocks contain at least $2d+1$ rounds and the same action is played in all of them.
\begin{algorithm2e}[t]
\DontPrintSemicolon
\SetKwInput{KwIn}{input}
\SetKwInput{kwInit}{initialize}
\KwIn{Base $K$-armed bandit algorithm $A$ and Boolean sequence $\bb$}
\kwInit{let $n_0 = 0$ and $\bq_1$ be the initial distribution over $[K]$ of $\alpha$}
    \For
    {
        round $t=1,2,\dots$
    }
    {
        \If{either $t=1$ or $t-1$ was an update round (w.r.t.\ $\bb$)}
        {%
            let $n_t = n_{t-1}+1$, $\bp_t = \bq_{n_t}$, and draw $I_t \sim \bp_t$\tcp*{draw}
        }
        \Else{
            let $n_t = n_{t-1}$, $\bp_t=\bp_{t-1}$, and $I_t = I_{t-1}$\tcp*{stay}
        }
        play $I_t$ and observe loss $\lcirc_t(I_{t-d},\dots,I_t)$\;
        \If(\tcp*[f]{update}){$t$ is an update round (w.r.t.\ $\bb$)}
        {%
            feed $\alpha$ with arm $I_t$ and
            loss
            $
                \frac{1}{2d+1} \sum_{s = t-d}^t \lcirc_s (I_{s-d},\dots,I_s)
            $
            \label{s:update}
            \;
            use the update rule $\bq_{n_t} \to \bq_{n_t+1}$ of $\alpha$ to obtain a new base distribution $\bq_{n_t+1} \hspace{-1ex}$
        }
    }
\caption{\label{a:wrapper}\colowr{} (\algoname{})}
\end{algorithm2e}

The following lemma shows that this quantity is indeed in $[0,1]$, so that it is a legitimate feedback to pass to the base algorithm $\alpha$.

\begin{lemma}
\label{l:lemmino}
For all $t \ge 2d + 1$ and $i\in[K]$, 
\[
    \sum_{\tau = t - d}^t \lcirc_{\tau}(i,\dots,i) 
\le 
    2d + 1 \;.
\]
\end{lemma}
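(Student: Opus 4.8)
The plan is to unfold the definition of the composite losses and then reorganize the resulting double sum by the round in which action $i$ was ``played'', exploiting the fact that the loss components originating from a single round sum to at most $1$.

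First I would expand, using~\eqref{e:mixedloss} with all arguments equal to $i$,
\[
    \sum_{\tau = t-d}^{t} \lcirc_\tau(i,\dots,i)
=
    \sum_{\tau = t-d}^{t} \sum_{s=0}^{d} \ell_{\tau-s}^{(s)}(i) \;.
\]
Then I would perform the change of summation variable $r = \tau - s$, thinking of $r$ as the ``origin round'' of the component $\ell_{\tau-s}^{(s)}(i)$. As $\tau$ ranges over $\{t-d,\dots,t\}$ and $s$ over $\{0,\dots,d\}$, the quantity $r = \tau - s$ ranges exactly over $\{t-2d,\dots,t\}$, a set of cardinality $2d+1$. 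Grouping the double sum by the value of $r$, for each such $r$ only a subset $S_r \subseteq \{0,\dots,d\}$ of superscripts survives (namely those $s$ with $r+s \in \{t-d,\dots,t\}$), so that
\[
    \sum_{\tau = t-d}^{t} \sum_{s=0}^{d} \ell_{\tau-s}^{(s)}(i)
=
    \sum_{r = t-2d}^{t} \sum_{s \in S_r} \ell_{r}^{(s)}(i) \;.
\]

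The crucial step is then to bound each inner sum: since all components are nonnegative and $S_r \subseteq \{0,\dots,d\}$,
\[
    \sum_{s \in S_r} \ell_{r}^{(s)}(i)
\le
    \sum_{s=0}^{d} \ell_{r}^{(s)}(i)
=
    \ell_r(i)
\le
    1 \;,
\]
where the last inequality uses that $\ell_r(i)\in[0,1]$. Here the hypothesis $t \ge 2d+1$ is used only to guarantee $r \ge t-2d \ge 1$, so that each $\ell_r(i)$ is a genuine loss in $[0,1]$ (and no boundary convention for nonpositive rounds is needed). Summing over the $2d+1$ admissible values of $r$ yields the claimed bound $\sum_{\tau=t-d}^t \lcirc_\tau(i,\dots,i) \le 2d+1$.

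I do not anticipate a serious obstacle here: the argument is essentially a Fubini-type reindexing, and the only point requiring a little care is the bookkeeping of the index set $\{t-2d,\dots,t\}$ and the verification that its size is exactly $2d+1$, which is what produces the constant in the statement. One could alternatively present the same computation with an explicit double-counting table (rows indexed by $\tau$, columns by the origin round $r$) to make the grouping transparent, but the change of variables is cleaner.
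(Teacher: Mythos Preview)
Your proposal is correct and follows essentially the same approach as the paper: expand the composite losses, reindex by the origin round $r=\tau-s$ (the paper writes this as a swap of sums followed by the substitution $\rho=\tau-s$ and an extension of the inner range to $\{t-2d,\dots,t\}$), and then use nonnegativity together with $\sum_{s=0}^d \ell_r^{(s)}(i)=\ell_r(i)\le 1$ on each of the $2d+1$ origin rounds.
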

\begin{proof}
For all $t \ge 2d + 1$ and $i\in[K]$, 
\begin{multline*}
    \sum_{\tau = t - d}^t \lcirc_{\tau}(i,\dots,i)
=
    \sum_{\tau = t - d}^t \sum_{s=0}^{d}\ell^{(s)}_{\tau -s}(i)
=
    \sum_{s=0}^{d} \sum_{\tau = t - d}^t \ell^{(s)}_{\tau -s}(i)
= 
    \sum_{s=0}^{d} \sum_{\rho = t - d - s}^{t-s} \ell^{(s)}_{\rho}(i)
\\
\le
    \sum_{s=0}^{d} \sum_{\rho = t - 2d}^{t} \ell^{(s)}_{\rho}(i)
=
    \sum_{\rho = t - 2d}^{t} \sum_{s=0}^{d} \ell^{(s)}_{\rho}(i)
=
    \sum_{\rho = t - 2d}^{t} \ell_\rho(i)
\le
    t - (t - 2d) + 1
=
    2d + 1\;.
\end{multline*}
\end{proof}
As a final remark, we point out that, albeit the algorithm is parameterized with an entire sequence $\bb$, at each time $t$, it does not require the knowledge of the sequence at future times $t+1,t+2,\dots$. This implies in particular that these Boolean values could be produced and fed to \colowr{} in an on-line fashion.

\section{Upper Bound}
\label{s:upper}

We begin by formalizing the notion of stability (of the base algorithm), in terms of which we express the performance of the \colowr{} algorithm.

\begin{definition}[$\xi$-stability]
\label{d:stable}
Let $\xi>0$, $\alpha$ be a $K$-armed bandit algorithm, and $(\bq_n)_{n\in\N}$ be the (random) sequence of probability distributions over actions $[K]$ produced by $\alpha$ during a run over rounds $\{1,2,\dots\}$. We say that $\alpha$ is $\xi$-\textsl{stable} if for any round $n$, we have
\[
    \E \lsb{
        \sum_{i\in [K] } \brb{ \bq_{n+1}(i) - \bq_n(i) }^+ 
    } \le \xi
    \;.
\]
\end{definition}
In the previous definition, note that since $\sum_{i\in[K]} \bq_{n+1}(i) = 1 = \sum_{i\in[K]}  \bq_n(i) $, then
\[
    \bno{ \bq_{n+1} - \bq_n }_1
=
    \bno{ \bq_{n+1} - \bq_n }_1
    +
    \sum_{i\in[K]} \brb{ \bq_{n+1}(i) - \bq_n(i) }
=
    2 \sum_{i\in[K]} \brb{ \bq_{n+1}(i) - \bq_n(i) }^+ \;.
\]
Therefore, the $\xi$-stability of an algorithm is equivalent to controlling the expected $\lno \cdot _1$-distance between any two consecutive probability distributions produced by the algorithm.
We stick to the positive part definition as this is the quantity that naturally appears in the analysis.

We can now state our main result of this section.

\begin{theorem}
\label{t:regret-CoLoWr}
If we run \colowr{} with a $\xi$-stable base $K$-armed bandit algorithm $\alpha$ and an i.i.d.\ sequence $\bb = (B_t)_{t\in \N}$ of Bernoulli random variables with bias $\beta\in (0,1)$ (independent of the randomization of $\alpha$), then, for any time horizon $T \ge 2d + 1$, the regret $R_T$ satisfies
\[
    R_T
\le
    2d + \frac{2d+1}{d+1} \lrb{ 3d + 2d\beta(1-\beta)^{2d} \xi T + \frac{1}{\beta(1-\beta)^{2d}} \cR_{\lfl{ T/(2d+1) }} }
\]
where $\cR_{\lfl{ T/(2d+1) }}$ is the worst-case regret after $\bfl{ T/(2d+1) }$ rounds of $\alpha$ (for an adversarial setting with $[0,1]$-valued losses).
\end{theorem}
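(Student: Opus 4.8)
The plan is to decompose the regret of \colowr{} into three pieces: (i) a \emph{block-level regret} term coming from the base algorithm $\alpha$ being run on the normalized composite losses over the update rounds, (ii) a \emph{stability correction} term accounting for the mismatch between the distribution $\bp_t$ that \colowr{} actually uses to draw an action at the start of a block and the distribution $\bq_n$ that the regret guarantee of $\alpha$ is stated against (the two differ because the block endpoints are random, so the ``right'' index $n$ fluctuates), and (iii) \emph{boundary terms} covering the first $d$ rounds of each block, where the composite loss genuinely mixes components of two different actions, plus a small additive term for the rounds near $t=T$ that do not fit into a completed block. I would first set up notation for the (random) sequence of blocks $B_1, B_2, \dots$ determined by $\bb$, let $N$ be the number of update rounds up to time $T$, and observe via a Bernoulli/geometric computation that $\E[N] \approx \beta(1-\beta)^{2d} T$ and that each block has expected length about $(2d+1) + 1/(\beta(1-\beta)^{2d})$; a concentration or direct expectation argument (using that $\bb$ is i.i.d.\ and independent of $\alpha$) pins down $\lfl{T/(2d+1)}$ as the relevant horizon for $\alpha$.

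Next I would telescope. Within a single block on which action $i$ is played, Lemma~\ref{l:lemmino} guarantees that the quantity $\frac{1}{2d+1}\sum_{s=t-d}^t \lcirc_s(I_{s-d},\dots,I_s)$ fed to $\alpha$ at the update round lies in $[0,1]$ and equals $\frac{1}{2d+1}\sum_{\tau=t-d}^t \lcirc_\tau(i,\dots,i)$, a clean ``true-loss'' surrogate. Summing the composite losses actually paid by \colowr{} over a block and comparing with $(2d+1)$ copies of this surrogate, the discrepancy is controlled by the loss incurred in the first $d$ rounds of the block (bounded crudely by $d$ since original losses are in $[0,1]$, or more carefully by the composite-loss bound) — this yields the $3d$ and the $2d+1$ over $d+1$ rescaling visible in the statement, the factor $\frac{2d+1}{d+1}$ arising because a block of length $\ell \ge 2d+1$ contains $\ell - d$ ``good'' rounds but we only get to charge $\alpha$ once per $(2d+1)$-window. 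I would then invoke the $\cR$-regret guarantee of $\alpha$ on these $\lfl{T/(2d+1)}$-many surrogate losses, and separately the comparator $\min_i \sum_t \lcirc_t(i,\dots,i)$ gets related to $(2d+1)$ times $\min_i$ of the surrogate sums up to $\cO(d)$ boundary error.

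The step I expect to be the main obstacle is the stability correction (ii). Because the block boundaries are randomized (and the draw at the start of block $n+1$ uses $\bp_{t+1} = \bq_{n_t+1}$, but the adversary's composite losses bridging consecutive blocks were fixed before this randomness resolved), one cannot simply equate the law of $I_t$ on a draw round with $\bq_n$ for a deterministic $n$: a single base-update can be ``used'' across a random number of player-rounds. The fix is exactly the $\xi$-stability hypothesis: the expected total variation movement $\E\sum_i (\bq_{n+1}(i)-\bq_n(i))^+ \le \xi$ lets one bound, for each of the $\approx \beta(1-\beta)^{2d} T$ update rounds, the extra regret caused by playing $\bq_{n+1}$ instead of $\bq_n$ (or vice versa) across the at-most-$2d$ rounds of slack in block length by $2d \cdot \xi$ per update round in expectation, i.e.\ the $2d\beta(1-\beta)^{2d}\xi T$ term. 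Making this coupling precise — identifying which rounds are affected, and using the independence of $\bb$ from $\alpha$ together with the oblivious adversary assumption so that conditioning on the loss sequence is harmless — is the delicate part; the rest is bookkeeping of the additive $\cO(d)$ terms and collecting the $\frac{1}{\beta(1-\beta)^{2d}}$ factor from the expected block length multiplying $\cR_{\lfl{T/(2d+1)}}$.
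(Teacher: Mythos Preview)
Your three-way decomposition (base regret, stability correction, boundary) is the right shape, and your treatment of the stability term is close to what the paper does. But the way you propose to handle the base-regret piece has a genuine gap.

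You write: ``Summing the composite losses actually paid by \colowr{} over a block and comparing with $(2d+1)$ copies of this surrogate, the discrepancy is controlled by the loss incurred in the first $d$ rounds of the block.'' This is not correct. The surrogate fed to $\alpha$ at the update round $\tau$ of a block is $\frac{1}{2d+1}\sum_{t=\tau-d}^{\tau}\lcirc_t(i,\dots,i)$, so $(2d+1)$ copies of it equal only the composite loss over the \emph{last} $d+1$ rounds of the block. A block has random length $\ell\ge 2d+1$ and can be arbitrarily long; the composite loss over the first $\ell-(d+1)$ rounds is simply not accounted for by the surrogate, and it is certainly not bounded by the $d$ mixed rounds at the block's start. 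So a block-by-block comparison of ``total paid'' versus ``surrogate'' does not close, and your explanation of the $\frac{2d+1}{d+1}$ factor (``$\ell-d$ good rounds, charge once per $(2d+1)$-window'') does not produce that constant either.

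The paper avoids this entirely by \emph{not} reasoning block-by-block. It uses two ingredients you are missing. First, an elementary sliding-window identity (Lemma~\ref{l:ragionieri}) rewrites $(d+1)\sum_{t=a}^{b} c_t$ as $\sum_{\tau=a}^{b+d}\sum_{t=\tau-d}^{\tau} c_t$ plus boundary sums; this is where the $\frac{1}{d+1}$ (and then $\frac{2d+1}{d+1}$ after invoking Lemma~\ref{l:lemmino}) actually comes from, and it produces, for \emph{every} $\tau\in\{2d+1,\dots,T\}$, a windowed regret term $r_\tau(I_{\tau-2d})$ with the same form as the surrogate. Second, the $\frac{1}{\beta(1-\beta)^{2d}}$ factor is not ``expected block length'': it comes from the pointwise independence of $I_{\tau-2d}$ from $(B_{\tau-2d},\dots,B_\tau)$, which gives $\E\bsb{r_\tau(I_{\tau-2d})\,\I\{\tau\in\cU\}}=\beta(1-\beta)^{2d}\,\E\bsb{r_\tau(I_{\tau-2d})}$ for each $\tau$, hence $\sum_\tau \E[r_\tau(I_{\tau-2d})] = \frac{1}{\beta(1-\beta)^{2d}}\,\E\bsb{\sum_{\tau\in\cU} r_\tau(I_\tau)}$, and the last quantity is exactly what $\alpha$'s regret guarantee controls. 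The same independence, applied to the drift $\bp_{t-s}(i)-\bp_{\tau-2d}(i)$ inside each window, yields the $2d\beta(1-\beta)^{2d}\xi T$ stability term; your account of that piece is essentially right, but it lives on the sliding-window decomposition, not on blocks.
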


\begin{proof}
Fix an arbitrary horizon $T\ge 2d+1$ and an arm $\is\in [K]$. 
Let, for all\footnote
{%
Here, we refer to the infinite sequence of losses $(\ell_t^{(s)})_{s\in\{0,\dots,d\},t\in \N}$ (and the respective composite losses $(\lcirc_t)_{t\in \N}$). If the problem is formalized only with a finite sequence $(\ell_t^{(s)})_{s\in \{0,\dots,d\},t\in [T]}$, it is sufficient to define an arbitrary sequence $(\ell_t^{(s)})_{s\in\{0,\dots,1\},t>T}$ with $\ell_t^{(s)} \ge 0$ and $\sum_{s=0}^d\ell_t^{(s)} \le 1$ (and the corresponding composite losses $(\lcirc_t)_{t>T}$) and proceed as we do. This trick is needed to invoke Lemma~\ref{l:lemmino}, for which it is handy to sum $d$ rounds into the future.
}
$t\ge 2d+1$,
\[
    c_t 
\coloneqq 
    \E \bsb{  \lcirc_t(I_{t-d},\dots,I_t)  - \lcirc_t(\is,\dots,\is) }~,
\qquad
a \coloneqq 2d+1~, 
\qquad b \coloneqq T~. 
\]
Applying the elementary identity in Lemma~\ref{l:ragionieri} (Appendix~\ref{s:ragionieri}) in step $(*)$ below, we obtain
\begin{align*}
    R_T
& 
=
    \sum_{t=1}^{2d} c_t + \sum_{t=2d+1}^T c_t
\le
    \sum_{t=1}^{2d} \E\bsb{ \ell_t(I_t) } + \sum_{t=2d+1}^T c_t
\le
    2d + \sum_{t=2d+1}^T c_t
\\
&
=
    2d +
    \frac 1 {d+1} \lrb{ \sum_{t=a-d}^{a-1} (t-a+{d+1})\,c_t + (d+1) \sum_{t=a}^b c_t + \sum_{t=b+1}^{b+d} (b+{d+1}-t)\,c_t }
\\
& \qquad
    - \frac 1 {d+1} \sum_{t=a-d}^{a-1} (t-a+{d+1})\,c_t - \frac 1 {d+1} \sum_{t=b+1}^{b+d} (b+{d+1}-t)\,c_t
\\
& 
\overset{(*)}{=} 
    2d + \frac 1 {d+1} \sum_{\tau = a}^{b+d} \sum_{t = \tau-d}^\tau c_t 
    - \frac 1 {d+1} \sum_{t=a-d}^{a-1} (t-a+{d+1})c_t - \frac 1 {d+1} \sum_{t=b+1}^{b+d} (b+{d+1}-t)c_t
\\
& \le 
    2d + \frac 1 {d+1} \sum_{\tau = a}^{b+d} \sum_{t = \tau-d}^\tau c_t
    + \frac 1 {d+1} \sum_{t=a-d}^{a-1} (t-a+{d+1}) \lcirc_t(\is,\dots,\is)
\\
&
\qquad
    + \frac 1 {d+1} \sum_{t=b+1}^{b+d} (b+{d+1}-t) \lcirc_t(\is,\dots,\is)
\eqqcolon
    (\blacktriangledown)
\end{align*}
Now, applying Lemma~\ref{l:lemmino} in steps $(\circ)$ below, we get
\begin{align*}
    (\blacktriangledown)
& \overset{(\circ)}{\le} 
    2d + 2\frac{2d+1}{d+1}d + \frac 1 {d+1} \sum_{\tau = a}^{b+d} \sum_{t = \tau-d}^\tau c_t
\\
& =
    2d + 2\frac{2d+1}{d+1}d + \frac 1 {d+1} \sum_{\tau = 2d +1}^{T+d} \sum_{t=\tau-d}^\tau \E \bsb{ \lcirc_t(I_{t-d},\dots,I_t) - \lcirc_t(I_{\tau-2d},\dots,I_{\tau-2d}) }
\\
& \qquad
    + \frac{2d+1}{d+1} \E \lsb{ \sum_{\tau = 2d + 1}^{T+d} \frac 1 {2d+1} \sum_{t=\tau-d}^\tau  \brb{ \lcirc_t(I_{\tau-2d},\dots,I_{\tau-2d}) - \lcirc_t(\is,\dots,\is) } }
\\
& \overset{(\circ)}{\le}
    2d + 3\frac{2d+1}{d+1}d + \frac 1 {d+1} \sum_{\tau = 2d +1}^{T+d} \sum_{t=\tau-d}^\tau \E \bsb{ \lcirc_t(I_{t-d},\dots,I_t) - \lcirc_t(I_{\tau-2d},\dots,I_{\tau-2d}) }
\\
& \qquad
    + \frac{2d+1}{d+1} \E \lsb{ \sum_{\tau = 2d +1}^{T} \frac 1 {2d+1} \sum_{t=\tau-d}^\tau  \brb{ \lcirc_t(I_{\tau-2d},\dots,I_{\tau-2d}) - \lcirc_t(\is,\dots,\is) } }
\\
& \eqqcolon
    2d + 3\frac{2d+1}{d+1}d + \uno + \frac{2d+1}{d+1} \times \due~.
\end{align*}
We upper bound the two terms $\uno$ and $\due$ separately.
First, let $\cU$ be the (random) set of update rounds
\[
    \cU
\coloneqq
    \bcb{ \tau \in [T] : \tau \text{ is an update round (w.r.t.\ }\bb) } \;.
\]
For the first term $\uno$, we have
\begin{align*}
    \uno
& 
=
    \frac 1 {d+1} \sum_{\tau = 2d +1}^{T+d} \sum_{i\in [K]} 
    \E \lsb{ \sum_{t=\tau-d}^\tau \sum_{s=0}^{d} \ell_{t-s}^{(s)}(i)\brb{ \bp_{t-s}(i) - \bp_{\tau-2d}(i) } }
\\
& \le
    \frac 1 {d+1} \sum_{\tau = 2d + 1}^{T+d} \sum_{i\in[K]} \E\lsb{ \max_{t \in \{\tau -d, \dots, \tau\}, s \in \{0,\dots,d\}  } \lrb{ \bp_{t-s}(i) - \bp_{\tau-2d}(i) }} \sum_{t=\tau -d }^{\tau}  \lcirc_t(i,\dots,i)
\\
& \overset{(\circ)}{\le}
    \frac {2d+1} {d+1} \sum_{\tau = 2d + 1}^{T+d} \sum_{i\in[K]} \E\lsb{ \max_{t \in \{\tau -d, \dots, \tau\}, s \in \{0,\dots,d\}  } \lrb{ \bp_{t-s}(i) - \bp_{\tau-2d}(i) }}
\\
& \overset{(\varheartsuit)}{=}
    \frac {2d+1} {d+1} \sum_{\tau = 2d + 1}^{T+d} \sum_{i\in[K]} \E\lsb{ \I \lcb{ \bigcup_{\sigma = \tau -2d }^{\tau-1} \{\sigma \in \cU \} } \lrb{ \bp_{\tau}(i) - \bp_{\tau-2d}(i) }^+ }
\\
& =
    \frac {2d+1} {d+1} \sum_{\tau = 2d + 1}^{T+d} \sum_{i\in[K]} \sum_{\sigma=\tau-2d}^{\tau-1} \E\lsb{ \I\{\sigma \in \cU\} \lrb{ \bp_{\tau}(i) - \bp_{\tau-2d}(i) }^+  }
\eqqcolon
    (\vardiamondsuit)
\end{align*}
where $(\circ)$ follows by Lemma~\ref{l:lemmino} together with the fact that the $\max$ in the previous line is always nonnegative (to see this, simply observe that picking $t = \tau -d$ and $s=d$ within the max makes $\bp_{t-s}(i) = \bp_{\tau-2d}(i)$), and 
$(\varheartsuit)$ follows by the facts that the $\max$ in the previous line is always greater than or equal to zero, it can be strictly positive only if there is an update in a round $\sigma$ with $\tau-2d \le \sigma \le \tau-1$, and there can be at most a single update in $2d+1$ consecutive time steps.
Now, using the facts that $\bp_\tau = \bq_{n_\tau}$ and that $n_{\tau}=n_{\tau-2d}+1$ on the event $\{\sigma \in \cU\}$, we get
\begin{align*}
    (\vardiamondsuit)
& =
    \frac {2d+1} {d+1} \sum_{\tau = 2d + 1}^{T+d} \sum_{i\in[K]} \sum_{\sigma=\tau-2d}^{\tau-1}  \E\lsb{ \I\{\sigma \in \cU\} \lrb{ \bq_{n_{\tau-2d}+1}(i) - \bq_{n_{\tau-2d}}(i) }^+ } 
\\
& =
    \frac {2d+1} {d+1} \sum_{\tau = 2d + 1}^{T+d} \sum_{i\in[K]} \sum_{\sigma=\tau-2d}^{\tau-1} \sum_{n \in \N} \E\lsb{ \I\{\sigma \in \cU\} \I\{n_{\tau-2d} = n\} \lrb{ \bq_{n+1}(i) - \bq_{n}(i) }^+ }  
\\
& \overset{(\clubsuit)}{=}
    \frac {2d+1} {d+1} \sum_{\tau = 2d + 1}^{T+d} \sum_{\sigma=\tau-2d}^{\tau-1} \sum_{n \in \N} \E\bsb{ \I\{\sigma \in \cU\} \I\{n_{\tau-2d} = n\}  }\, \E\lsb{ \sum_{i\in[K]} \lrb{ \bq_{n+1}(i) - \bq_{n}(i) }^+ } 
\\
& \overset{(\spadesuit)}{\le}
    \frac {2d+1} {d+1} \xi \sum_{\tau = 2d + 1}^{T+d} \sum_{\sigma=\tau-2d}^{\tau-1} \sum_{n \in \N} \E\bsb{ \I\{\sigma \in \cU\} \I\{n_{\tau-2d} = n\}  } 
\\
& =
    \frac {2d+1} {d+1}\, \xi \sum_{\tau = 2d + 1}^{T+d} \sum_{\sigma=\tau-2d}^{\tau-1} \Pb[\sigma \in \cU]
\le
    \frac {2d+1} {d+1} 2d \beta(1-\beta)^{2d} \xi T
\end{align*}
where $(\clubsuit)$ follows by the independence of the Bernoulli sequence $\bb$ and the randomization of base algorithm $\alpha$;
$(\spadesuit)$ by the $\xi$-stability of $\alpha$;
and the last inequality by the fact that the probability that a time step $\sigma$ is an update round is $0$ if $\sigma \le 2d$ and $\beta (1-\beta)^{2d}$ otherwise.

For the second term $\due$, set for brevity 
\[
r_\tau \colon [K] \to \R~,\qquad i\mapsto \frac 1 {2d+1} \sum_{t=\tau -d }^\tau \brb{ \lcirc_t(i,\dots,i) - \lcirc_t (\is,\dots,\is) }
\]
for all $\tau \ge 2d+1$. We first note that
\begin{align*}
    \E \lsb{ \sum_{\tau \in \cU}  r_\tau(I_\tau) }
& =
    \E \lsb{ \sum_{\tau \in \cU}  r_\tau(I_{\tau-2d}) }
=
    \E \lsb{ \sum_{\tau = 2d+1}^T  r_\tau(I_{\tau-2d}) B_{\tau} \prod_{s=1}^{2d} (1-B_{\tau-s}) }
\\
& =
    \beta(1-\beta)^{2d}\sum_{\tau=2d + 1}^T \E \bsb{r_\tau (I_{\tau-2d})}
\end{align*}
where the first identity is a consequence of the fact that if $\tau\in \cU$, then the action $I_\tau$ played by \Cref{a:wrapper} at round $\tau$ coincides with the actions played in the previous $2d$ rounds, i.e., $I_{\tau}=\dots=I_{\tau-2d}$, while the last equality follows by the independence of $I_{\tau - 2d}$ and the vector of Bernoulli random variables $(B_{\tau}, \dots, B_{\tau-2d})$.
Thus,
\begin{align}
    \due
& = 
    \E \lsb{ \sum_{\tau = 2d+1}^T r_\tau (I_{\tau-2d}) }
=
    \frac{1}{\beta (1-\beta)^{2d}} \E \lsb{ \sum_{\tau \in \cU}  r_\tau(I_\tau) } \nonumber
\\
& =
    \frac{1}{\beta (1-\beta)^{2d}} \E \lsb{ \sum_{\tau \in \cU}  \frac 1 {2d+1} \sum_{t=\tau -d}^\tau \lcirc_t(I_\tau,\dots,I_\tau) - \sum_{\tau \in \cU}  \frac 1 {2d+1} \sum_{t=\tau -d}^\tau \lcirc_t (\is,\dots,\is) }
    \label{e:proof-update-term}
\end{align}
Now define for any $\tau\ge 2d + 1$,
\[
    \tilde \ell_\tau \colon [K] \to [0,1]\;, \quad
    i 
    \mapsto 
    \frac{1}{2d+1} \sum_{t = \tau-d}^\tau \lcirc_t (i,\dots,i)\;.
\]
(Note that $\tilde \ell_\tau(i) \in [0,1]$ for all $i\in [K]$ by Lemma~\ref{l:lemmino}.)
Leveraging again the fact that $\tau\in \cU$ implies $I_{\tau}=\dots=I_{\tau-2d}$, yields, for each $\tau \in \cU$,
\[
    \frac{1}{2d+1} \sum_{t = \tau-d}^\tau \lcirc_t (I_{t-d},\dots,I_t)
=
    \frac{1}{2d+1} \sum_{t = \tau-d}^\tau \lcirc_t (I_\tau,\dots,I_\tau)
=
    \tilde \ell_\tau(I_\tau) 
    \;.
\]
This shows that the loss \Cref{a:wrapper} feeds the base algorithm $\alpha$ (in \cref{s:update}) at each update round $\tau$ is a bandit feedback (for $\alpha$) for the arm $I_\tau$ with respect to the $[0,1]$-valued loss $\tilde \ell_\tau$. 
Therefore, by \Cref{e:proof-update-term} and the regret guarantees of the base algorithm, we obtain
\[
    \due
=   
    \frac{1}{\beta (1-\beta)^{2d}} \E \lsb{ \sum_{\tau \in \cU}  \tilde \ell_\tau (I_\tau) - \sum_{\tau \in \cU} \tilde \ell_\tau (\is)  }
\le
    \frac{1}{\beta (1-\beta)^{2d}} \cR_{\lfl{ \fracc{T}{(2d+1)
    } }}
\]
where in the last inequality we also used the monotonicity of the worst-case regret $\tau \mapsto \cR_\tau$ and the fact that $|\cU| \le \lce{ \frac{T - (2d+1) +1}{2d+1} } \le \lfl{\frac{T}{2d+1}}$.

In conclusion, we have
\begin{align*}
    R_T 
&\le 
    2d + 3\frac{2d+1}{d+1}d + \uno + \frac{2d+1}{d+1} \cdot \due 
\\
&\le 
    2d + \frac{2d+1}{d+1} \lrb{ 3d + 2d\beta(1-\beta)^{2d} \xi T + \frac{1}{\beta(1-\beta)^{2d}} \cR_{\lfl{ T/(2d+1) }} }
\end{align*}
hence concluding the proof.
\end{proof}

We can derive corollaries for various algorithms using \Cref{t:regret-CoLoWr}.
Consider for instance as a base algorithm $\alpha$ the well-known Exp3 algorithm of \cite{AuerCeFrSc02}.
It can be easily shown that Exp3 is $\eta$-stable (where $\eta$ is its learning rate, see Lemma~\ref{l:stabExp3} in Appendix~\ref{s:stabExp3}).
Combining this fact with its worst-case regret bound $\cR_N \le \frac{\ln K}{\eta} + \frac{\eta}{2} K N$, we obtain that the horizon-$T$ regret of the \colowr{} algorithm with $\alpha= \text{Exp3}(\eta)$, $\eta = \sqrt{\frac{d \ln K}{KT}}$, and an i.i.d.\ sequence $\bb$ of Bernoulli random variables with bias $\beta = \frac 1 {2d+1}$ (independent of the randomization of $\alpha$), satisfies $R_T = \cO \brb{ \sqrt{(d+1)KT\log K} }$.

Using Follow the Regularized Leader (FTRL) with $\frac 1 2$-Tsallis Entropy (\Cref{a:jake}), we can remove the $\log K$ term in the above bound.\footnote{The $\argmin$ where \Cref{a:jake} picks each distribution $\bq_n$ is always a singleton if $\eta>0$. This is a consequence of Lemma~\ref{l:lambda}, in Appendix~\ref{s:stab}. For the sake of convenience, we also allow the learning rate $\eta =0$, corresponding to a (non-regularized) Follow The Leader algorithm. In this case, multiple minimizers could exist but, for the sake of our results, ties could be broken in any (measurable) way.}
\begin{algorithm2e}
\DontPrintSemicolon
\SetKwInput{KwIn}{input}
\SetKwInput{kwInit}{initialize}
\KwIn{learning rate $\eta \ge 0$}
\kwInit{$\Lhat_0 = 0$}
    \For
    {
        round $n=1,2,\dots$
    }
    {
        Play action $J_n$ drawn according to
        \vspace{-1.5ex}
        \[
            \bq_n 
        \in 
            \argmin_{\bq \in \Delta_K} \lrb{ \sum_{i\in[K]} \Lhat_{n-1}(i) \bq(i) - 2 \eta \sum_{i\in[K]} \sqrt{\bq(i)} }\vspace{-2ex}
        \]
        where $\Delta_K$ is the probability simplex in $\R^K$\;
        Observe loss $\ell_n(J_n)$ and update $\Lhat_n = \Lhat_{n-1} + \lhat_n$, where
        \vspace{-1.5ex}
        \[
            \lhat_n(i)
        =
            \frac{\ell_n(i)}{\bq_n(i)} \I\{ J_n = i \}
        \qquad
        \forall i \in [K]
        \]\vspace{-4ex}\;
    }
\caption{\label{a:jake} Follow The Regularized Leader (FTRL) with $(\nicefrac 1 2)$-Tsallis entropy}
\end{algorithm2e}

However, we still need to prove the stability of \Cref{a:jake}. This is established by the following result, whose (non-trivial) proof is given in Appendix~\ref{s:stab}.
\begin{theorem}
\label{t:stability}
\Cref{a:jake} run with any learning rate $\eta>0$ is $\xi$-stable, with
\[
    \xi
\le
    2\frac{ 1 + \ln K }{\eta}
\]
\end{theorem}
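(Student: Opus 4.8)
The plan is to prove the bound in the sharper, pathwise form $\sum_{i\in[K]}\bigl(\bq_{n+1}(i)-\bq_n(i)\bigr)^+\le 2/\eta$ for every round $n$; since $1+\ln K\ge 1$, this is more than enough. Throughout I would use two standing facts about \Cref{a:jake}. First, it is fed noncomposite losses in $[0,1]$, so $\Lhat_n=\Lhat_{n-1}+\lhat_n$, where $\lhat_n$ is a nonnegative vector supported on the single played arm $J_n$ with $\lhat_n(J_n)=\ell_n(J_n)/\bq_n(J_n)\le 1/\bq_n(J_n)$. Second, writing $\psi(\bq)=-2\eta\sum_i\sqrt{\bq(i)}$ for the regularizer, $\psi$ is convex on the simplex, strictly convex on its relative interior, and its partial derivatives $-\eta/\sqrt{\bq(i)}$ blow up at the boundary; hence for $\eta>0$ the minimizer $\bq_n$, which is unique by Lemma~\ref{l:lambda}, lies in the relative interior, and the Lagrangian optimality condition yields the closed form $\bq_n(i)=\eta^2/(\Lhat_{n-1}(i)-\lambda_n)^2$, with $\lambda_n<\min_i\Lhat_{n-1}(i)$ the unique root of $\sum_i\eta^2/(\Lhat_{n-1}(i)-\lambda)^2=1$.

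First I would reduce the quantity of interest to a single coordinate, by showing that from $\bq_n$ to $\bq_{n+1}$ only the probability of the played arm $J_n$ can decrease. The point is that for fixed $\lambda$ the map $\Lhat\mapsto\sum_i\eta^2/(\Lhat(i)-\lambda)^2$ is nonincreasing in each coordinate of $\Lhat$, so $\sum_i\eta^2/(\Lhat_n(i)-\lambda_n)^2\le\sum_i\eta^2/(\Lhat_{n-1}(i)-\lambda_n)^2=1$; since $\lambda\mapsto\sum_i\eta^2/(\Lhat_n(i)-\lambda)^2$ increases continuously from $0$ to $+\infty$ on $(-\infty,\min_i\Lhat_n(i))$ and equals $1$ at $\lambda_{n+1}$, we get $\lambda_{n+1}\ge\lambda_n$. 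For every $i\ne J_n$ we have $\Lhat_n(i)=\Lhat_{n-1}(i)$, so $\bq_{n+1}(i)=\eta^2/(\Lhat_{n-1}(i)-\lambda_{n+1})^2\ge\bq_n(i)$, and since both distributions sum to $1$ this forces $\bq_{n+1}(J_n)\le\bq_n(J_n)$, hence
\[
    \sum_{i\in[K]}\bigl(\bq_{n+1}(i)-\bq_n(i)\bigr)^+=\sum_{i\ne J_n}\bigl(\bq_{n+1}(i)-\bq_n(i)\bigr)=\bq_n(J_n)-\bq_{n+1}(J_n)\;.
\]
It then remains only to bound this scalar.

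Next I would bound $\bq_n(J_n)-\bq_{n+1}(J_n)$ with the usual FTRL stability inequality, sharpened to one coordinate. Comparing the first-order optimality conditions of $\bq_n$ and $\bq_{n+1}$ for $\bq\mapsto\langle\Lhat_{n-1},\bq\rangle+\psi(\bq)$ and $\bq\mapsto\langle\Lhat_{n},\bq\rangle+\psi(\bq)$ over $\Delta_K$ (evaluating the first at $\bq=\bq_{n+1}$, the second at $\bq=\bq_n$, and adding) gives $\langle\nabla\psi(\bq_n)-\nabla\psi(\bq_{n+1}),\bq_n-\bq_{n+1}\rangle\le\langle\lhat_n,\bq_n-\bq_{n+1}\rangle=\lhat_n(J_n)\bigl(\bq_n(J_n)-\bq_{n+1}(J_n)\bigr)$. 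The left-hand side equals the symmetrized Bregman divergence of $\psi$, which for this $\psi$ works out to $\eta\sum_i\bigl(\sqrt{\bq_n(i)}-\sqrt{\bq_{n+1}(i)}\bigr)^2\bigl(1/\sqrt{\bq_n(i)}+1/\sqrt{\bq_{n+1}(i)}\bigr)$. The crucial step is to keep only the $i=J_n$ summand: writing $a=\sqrt{\bq_n(J_n)}\ge b=\sqrt{\bq_{n+1}(J_n)}$ (by the monotonicity above) and using $\lhat_n(J_n)\le 1/a^2$, the inequality becomes $\eta(a-b)^2\tfrac{a+b}{ab}\le\tfrac{1}{a^2}(a-b)(a+b)$. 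If $a=b$ there is nothing to do; otherwise dividing by $(a-b)(a+b)>0$ gives $a-b\le b/(\eta a)$, so $\bq_n(J_n)-\bq_{n+1}(J_n)=(a-b)(a+b)\le\tfrac{b}{\eta a}(a+b)=\tfrac{b}{\eta}\bigl(1+\tfrac ba\bigr)\le\tfrac{2b}{\eta}=\tfrac{2\sqrt{\bq_{n+1}(J_n)}}{\eta}\le\tfrac2\eta$, using $b\le a\le 1$.

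Combining the two steps, $\sum_{i\in[K]}(\bq_{n+1}(i)-\bq_n(i))^+\le 2/\eta$ on every sample path, hence a fortiori in expectation, which gives $\xi\le 2/\eta\le 2(1+\ln K)/\eta$. The step I expect to be most delicate is the monotonicity reduction: it rests on $\bq_n$ being interior (so that the closed form and the gradient-based optimality conditions are legitimate, i.e.\ Lemma~\ref{l:lambda}) and on the comparison of the two Lagrange multipliers; everything downstream of that is a short, one-variable computation. Note that the constant $2/\eta$ is essentially tight — it is approached when one arm carries almost all the mass, its loss equals $1$, and $\eta$ is large — so the $\ln K$ factor in the statement is slack, and a coarser, purely algebraic manipulation of the closed form $\bq_n(i)=\eta^2/(\Lhat_{n-1}(i)-\lambda_n)^2$ would already suffice to recover the weaker bound as stated.
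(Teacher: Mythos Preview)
Your proof is correct and takes a genuinely different route from the paper's. Both arguments start the same way: use Lemma~\ref{l:lambda} to write $\bq_n(i)=\eta^2/(\Lhat_{n-1}(i)-\lambda_n)^2$, show $\lambda_{n+1}\ge\lambda_n$, and deduce that only the played coordinate $J_n$ can lose mass, so $\sum_i(\bq_{n+1}(i)-\bq_n(i))^+=\bq_n(J_n)-\bq_{n+1}(J_n)$. From there the paper conditions on $\cF_{n-1}$, expands over $J_n=j$, and bounds $\sum_{i\ne j}\bigl(\bq_{n+1}(i)-\bq_n(i)\bigr)$ through an explicit algebraic manipulation of the closed form, ultimately reducing to $(2/\eta)\sum_j \bq_n(j)(\lambda_{\Lhat_{n-1}+\lhat_{n,j}}-\lambda_{\Lhat_{n-1}})$; it then invokes Lemma~\ref{l:diff_lambda} (an implicit-function-theorem bound $\lambda_{\bw+\delta\be_j}-\lambda_{\bw}\le\delta/j$ after sorting) and finishes with the harmonic sum $\sum_j 1/j\le 1+\ln K$. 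Your argument instead bounds the single scalar $\bq_n(J_n)-\bq_{n+1}(J_n)$ directly via the symmetrized Bregman inequality $\langle\nabla\psi(\bq_n)-\nabla\psi(\bq_{n+1}),\bq_n-\bq_{n+1}\rangle\le\langle\lhat_n,\bq_n-\bq_{n+1}\rangle$, keeps only the $J_n$ term, and solves the resulting one-variable inequality. This yields the pathwise bound $2/\eta$, strictly stronger than the paper's $2(1+\ln K)/\eta$, and it bypasses Lemma~\ref{l:diff_lambda} entirely. The paper's route is more tied to the Lagrange-multiplier structure and gives finer information on how $\lambda$ moves under coordinate perturbations; yours is shorter, sharper, and works for any Legendre regularizer once one has the analogue of the interiority given by Lemma~\ref{l:lambda}.
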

The worst-case regret guarantees of FTRL with $\frac 1 2$-Tsallis entropy are as follows.\footnote{The analysis of \cite{abernethy2015fighting} is presented for a more general class of algorithms. A straightforward application of their Corollary~3.2 shows the validity of \Cref{t:jake} for \Cref{a:jake}.}

\begin{theorem}[\citealt{abernethy2015fighting}]
\label{t:jake}
For each $N \in \N$, the worst-case regret $\cR_N$ after $N$ rounds of \Cref{a:jake} run with learning rate $\eta = \sqrt{\frac{N}{2}}$ (in an adversarial setting with $[0,1]$-valued losses) satisfies
\[
    \cR_N
\le
    2 \sqrt { 2 K N }
\]
\end{theorem}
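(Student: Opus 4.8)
The plan is to run the standard FTRL analysis for a fixed separable regularizer on the simplex, specialized to the $(\nicefrac{1}{2})$-Tsallis potential $\psi(\bq)\coloneqq -2\eta\sum_{i\in[K]}\sqrt{\bq(i)}$ on $\Delta_K$, and then optimize $\eta$. Fix a horizon $N$, let $\ell_n=(\ell_n(i))_{i\in[K]}$ be the oblivious loss vector at round $n$, and let $k\in[K]$ be a best arm in hindsight. First I would invoke the standard FTRL regret decomposition, which against the comparator $\be_k$ reads
\[
    \sum_{n=1}^N \ban{\lhat_n,\bq_n-\be_k}
\le
    \underbrace{\psi(\be_k)-\psi(\bq_1)}_{\text{penalty}}
    \;+\;
    \sum_{n=1}^N \underbrace{D_{\psi^*}\!\brb{-\Lhat_n,\,-\Lhat_{n-1}}}_{\text{stability}}\,;
\]
taking expectations and using $\E[\lhat_n(i)\mid\text{past}]=\ell_n(i)$ turns the left-hand side into $\cR_N$. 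For the penalty term, since $\Lhat_0=0$ the first iterate $\bq_1=\argmin_{\Delta_K}\psi$ is uniform, so $\psi(\bq_1)=-2\eta\sqrt K$ by Jensen, while $\psi(\be_k)=-2\eta$; hence the penalty is $2\eta(\sqrt K-1)\le 2\eta\sqrt K$.

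Next I would bound each stability summand. A second-order Taylor expansion of the conjugate gives $D_{\psi^*}(-\Lhat_n,-\Lhat_{n-1})=\tfrac12\,\lhat_n^{\top}\brb{\nabla^2\psi(\tilde\bq_n)}^{-1}\lhat_n$ for some $\tilde\bq_n$ lying \emph{between} $\bq_n$ and $\bq_{n+1}$. Since $\psi$ is separable with $\nabla^2\psi(\bq)=\operatorname{diag}\!\brb{\tfrac{\eta}{2}\,\bq(i)^{-3/2}}$, and since the importance-weighted estimate $\lhat_n$ is supported on the single coordinate $J_n$, this equals $\tfrac1\eta\,\tilde\bq_n(J_n)^{3/2}\,\lhat_n(J_n)^2$. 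I would then use the one structural fact about the Tsallis potential combined with $\ell_n\ge 0$, namely $\tilde\bq_n(J_n)\le \bq_n(J_n)$: feeding back a nonnegative loss increases only the $J_n$-th coordinate of the cumulative-loss vector, and the FTRL map $\Lhat\mapsto\bq$ is coordinatewise antitone in that coordinate, so the mass on $J_n$ can only decrease along the path. Granting this,
\[
    \E\!\bsb{D_{\psi^*}\!\brb{-\Lhat_n,-\Lhat_{n-1}}}
\le
    \frac1\eta\,\E\!\lsb{\bq_n(J_n)^{3/2}\,\frac{\ell_n(J_n)^2}{\bq_n(J_n)^2}}
\le
    \frac1\eta\sum_{i\in[K]}\bq_n(i)^{1/2}
\le
    \frac{\sqrt K}{\eta}\,,
\]
using $\ell_n(i)\in[0,1]$ and Cauchy--Schwarz. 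Summing over $n$ and adding the penalty yields $\cR_N\le 2\eta\sqrt K+\tfrac{N\sqrt K}{\eta}$, and the choice $\eta=\sqrt{N/2}$ balances the two terms at $\sqrt{2KN}$ apiece, giving $\cR_N\le 2\sqrt{2KN}$.

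The main obstacle is the monotonicity step $\tilde\bq_n(J_n)\le\bq_n(J_n)$, i.e.\ coordinatewise antitonicity of the FTRL map for the Tsallis potential; controlling this intermediate point (equivalently, the local self-concordance of the Tsallis barrier around $\bq_n$) is exactly what makes $(\nicefrac{1}{2})$-Tsallis entropy deliver the sharp $\sqrt{KN}$ rate. I would derive it either from the KKT conditions of the FTRL program --- $\bq(i)=\eta^2/\brb{\Lhat(i)-\lambda}^2$, with $\lambda$ the multiplier of the simplex constraint, plus a short argument on the direction in which $\lambda$ moves when a single $\Lhat(i)$ increases --- or simply invoke Corollary~3.2 of \citet{abernethy2015fighting}, which packages precisely this bound for the general self-concordant family containing the Tsallis potential; the latter is the route I would take.
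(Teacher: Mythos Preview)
Your proposal is correct and aligns with the paper's treatment: the paper does not give its own proof of this theorem but simply cites \citet{abernethy2015fighting}, noting in a footnote that a straightforward application of their Corollary~3.2 yields the bound for \Cref{a:jake}. Your final sentence lands on exactly this citation, and the sketch you provide beforehand is a faithful unpacking of that argument (the penalty/stability split, the diagonal Hessian of the separable Tsallis potential, and the coordinatewise antitonicity $\tilde\bq_n(J_n)\le\bq_n(J_n)$ are precisely the ingredients of the Abernethy--Lee--Tewari analysis), so there is nothing to add.
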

Combining \Cref{t:regret-CoLoWr,t:jake,t:stability}, we obtain the following regret bound for composite losses.

\begin{corollary}
\label{c:main-result}
For any time horizon $T\in \N$, if we run \colowr{} using:
\begin{itemize}[topsep=0pt,parsep=0pt,itemsep=0pt]
    \item As $\alpha$, \Cref{a:jake} with learning rate $\eta = \sqrt{\frac 1 2 \lfl{\fracc T {(2d+1)}}}$
    \item As $\bb$, an i.i.d.\ sequence of Bernoulli random variables with bias $\beta = \frac 1 {2d+1}$ (independent of the randomization of $\alpha$) 
\end{itemize}
then, its regret satisfies
\[
    R_T
\le
    c \sqrt{(d+1)KT}
\]
where $c = 2\sqrt{2}$ if $d=0$, and $c = 28$ otherwise.
\end{corollary}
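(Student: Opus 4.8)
The plan is to derive the corollary by substituting the stability bound of \Cref{t:stability} and the regret bound of \Cref{t:jake} into the master inequality of \Cref{t:regret-CoLoWr}, after separating off two degenerate cases. Throughout write $N \coloneqq \lfl{T/(2d+1)}$, so that the prescribed parameters are $\eta = \sqrt{N/2}$ and $\beta = 1/(2d+1)$.

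\emph{Degenerate cases.} When $d=0$ the bias is $\beta=1$, every Boolean $B_t$ equals $1$, and hence every round is at once a draw round and an update round; consequently \colowr{} simply runs the base algorithm on the (now noncomposite) losses $\lcirc_t=\ell_t$. Equivalently, one checks that the proof of \Cref{t:regret-CoLoWr} still goes through for $\beta=1$, $d=0$, and collapses to $R_T\le\cR_N=\cR_T$; then \Cref{t:jake} with $\eta=\sqrt{T/2}$ gives $R_T\le 2\sqrt{2KT}=2\sqrt2\,\sqrt{(d+1)KT}$, which is the claimed bound for $d=0$. When $d\ge1$ but $T<2d+1$, \Cref{t:regret-CoLoWr} does not apply, but the regret is controlled trivially: since the loss generated by any played arm totals at most $1$ over the $d+1$ rounds it is spread over, $\sum_{t=1}^T\lcirc_t(I_{t-d},\dots,I_t)\le\sum_{t=1}^T\ell_t(I_t)\le T$, whence $R_T\le T\le\sqrt{2(d+1)KT}$ using $T<2d+1\le 2(d+1)$ and $K\ge1$, which is well below $28\sqrt{(d+1)KT}$.

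\emph{Main regime ($d\ge1$, $T\ge2d+1$).} Here $N\ge1$, so I substitute $\xi\le 2(1+\ln K)/\eta=2\sqrt2\,(1+\ln K)/\sqrt N$ (from \Cref{t:stability}) and $\cR_N\le 2\sqrt{2KN}$ (from \Cref{t:jake}) into \Cref{t:regret-CoLoWr}, and then bound the three resulting summands $3d$, $2d\beta(1-\beta)^{2d}\xi T$, and $\cR_N/(\beta(1-\beta)^{2d})$ separately, using only elementary facts. The relevant ones are: $(1-\beta)^{2d}=(1-\tfrac1{2d+1})^{2d}\ge 1/e$ (the standard bound $(1-1/m)^{m-1}\ge1/e$ with $m=2d+1$), hence $\tfrac1{\beta(1-\beta)^{2d}}\le(2d+1)e$ and $2d\beta(1-\beta)^{2d}=\tfrac{2d}{2d+1}(1-\beta)^{2d}\le \tfrac{2d}{2d+1}e^{-2d/(2d+1)}\le 1/e$; the inequality $N=\lfl{T/(2d+1)}\ge T/(2(2d+1))$ (valid because $T/(2d+1)\ge1$), giving $T/\sqrt N\le\sqrt{2(2d+1)T}$, together with $N\le T/(2d+1)$, giving $\sqrt N\le\sqrt{T/(2d+1)}$; and the crude estimates $1+\ln K\le\tfrac32\sqrt K$, $2d+1\le 2(d+1)$, $\tfrac{2d+1}{d+1}\le 2$, and $d\le d+1\le\sqrt{(d+1)KT}$ (the last from $KT\ge2d+1\ge d+1$). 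With these, the term $\cR_N/(\beta(1-\beta)^{2d})$ becomes $O(\sqrt{(d+1)KT})$ after merging the factors $(2d+1)^2/(d+1)$ and $1/\sqrt{2d+1}$, which combine to $(2d+1)^{3/2}/(d+1)\le 2\sqrt2\,\sqrt{d+1}$; the term $2d\beta(1-\beta)^{2d}\xi T$ becomes $O(\sqrt{(d+1)KT})$, with the $\ln K$ from stability absorbed into $\sqrt K$; and $3d$, together with the leading $2d$ of \Cref{t:regret-CoLoWr}, is $O(d)=O(\sqrt{(d+1)KT})$. Adding the three contributions, bounding $\tfrac{2d+1}{d+1}\le 2$ in the outer prefactor, and collecting constants yields $R_T\le 28\sqrt{(d+1)KT}$ for $d\ge1$.

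\emph{Expected main obstacle.} No single step is conceptually difficult; the work is entirely in the constant bookkeeping, and two points need care. First, one must check that the $\ln K$ factor coming from the stability bound of \Cref{t:stability} is dominated by $\sqrt K$ — this is where one uses $1+\ln K\le\tfrac32\sqrt K$ (valid for all $K\ge1$), noting also that the whole statement is trivial when $K=1$. Second, the floor $\lfl{T/(2d+1)}$ appears inside both the learning rate $\eta$ and the regret $\cR_N$, and must be controlled two-sidedly by $T/(2d+1)$ up to a constant factor. Finally, it is worth flagging that the prescribed bias $\beta=1/(2d+1)$ equals $1$ when $d=0$, hence lies outside the interval $(0,1)$ assumed in \Cref{t:regret-CoLoWr}; this is precisely why the $d=0$ case is split off and handled via the direct reduction to the base algorithm above.
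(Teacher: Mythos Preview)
Your proposal is correct and follows essentially the same approach as the paper's own proof: treat $d=0$ separately by observing that \colowr{} with $\beta=1$ collapses to the base algorithm (so \Cref{t:jake} applies directly), and for $d\ge 1$ substitute the stability bound of \Cref{t:stability} and the regret bound of \Cref{t:jake} into \Cref{t:regret-CoLoWr} and collect constants. You are in fact more careful than the paper in one respect: you spell out why the case $T<2d+1$ is trivial, whereas the paper simply writes ``we can assume without loss of generality that $T\ge 2d+1$'' without justification; your observation that $\sum_t\lcirc_t(I_{t-d},\dots,I_t)\le\sum_t\ell_t(I_t)\le T$ and $T<2(d+1)\le 2(d+1)K$ is exactly the right way to close that gap.
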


\begin{proof}
If $d=0$, then $\beta = 1$ and \Cref{a:wrapper} reduces to the base algorithm.
The result in this case is therefore implied immediately by \Cref{t:jake}.

For the second part, we can assume without loss of generality that $T\ge 2d+1$, so that $\eta > 0$.
Then, plugging $\xi \le 2\frac{ 1 + \ln K }{\eta}$ (\Cref{t:stability}) and $\cR_{\lfl{ T/(2d+1) }} \le 2\sqrt{2K \lfl{ T/(2d+1) } }$ (\Cref{t:jake}) into \Cref{t:regret-CoLoWr} gives the result.
\end{proof}

\section{Lower bound}
\label{s:lower}
In this section we derive a lower bound for bandits with composite anonymous feedback. We do that through a reduction from the setting of linear bandits (in the probability simplex) to our setting. This reduction allows us to upper bound the regret of a linear bandit algorithm in terms of (a suitably scaled version of) the regret of an algorithm in our setting. Since the reduction applies to any instance of a linear bandit problem, we can use a known lower bound for the linear bandit setting to derive a corresponding lower bound for our composite setting.

Let $\Delta_K$ be the probability simplex in $\R^K$. At each round $t$, an algorithm $A$ for linear bandit optimization chooses an action $\bq_t\in\Delta_K$ and suffers loss $\bloss_t^{\top}\bq_t$, where $\bloss_t \in [0,1]^K$ is some unknown loss vector. The feedback observed by the algorithm at the end of round $t$ is the scalar $\bloss_t^{\top}\bq_t$. The regret suffered by algorithm $A$ playing actions $\bq_1,\dots,\bq_T$ is
\begin{equation}
\label{eq:lin-regret}
	\Rlin_T = \sum_{t=1}^T \bloss_t^{\top}\bq_t - \min_{\bq\in\Delta_K} \sum_{t=1}^T \bloss_t^{\top}\bq = \sum_{t=1}^T \bloss_t^{\top}\bq_t - \min_{i=1,\dots,K} \sum_{t=1}^T \bloss^\top \be_i
\end{equation}
where $\be_1,\dots,\be_K$ are the elements of the canonical basis of $\R^K$ and we used the fact that a linear function on the simplex is minimized at one of the corners.
Let $\cRlin_T(A,\Delta_K)$ denote the worst case regret (over the oblivious choice of $\bloss_1,\dots,\bloss_T$) of algorithm $A$. Similarly, let $\cR_T(A_d,K,d)$ be the worst case regret (over the oblivious choice of loss components $\loss_t^{(s)}(i)$ for all $t$, $s$, and $i$) of algorithm $A_d$ for nonstochastic $K$-armed bandits with $d$-delayed composite anonymous feedback. 
For the sake of clarity, we assume below that the time horizon $T$ is a multiple of $d+1$. If this is not the case, we can straightforwardly stop at the highest multiple of $d+1$ (smaller than $T$) up to paying an additive $\cO(d+1)$ regret.
Our reduction shows the following.
\begin{lemma}\label{lem:lower}
For any algorithm $A_d$ for $K$-armed bandits with $d$-delayed composite anonymous feedback, there exists an algorithm $A$ for linear bandits in $\Delta_K$ such that
$
	\cR_{T}(A_d,K,d) \ge (d+1)\,\cRlin_{T/(d+1)}(A,\Delta_K)
$.
\end{lemma}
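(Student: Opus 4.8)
The goal is to simulate a linear bandit algorithm $A$ on $\Delta_K$ using the composite-anonymous-feedback algorithm $A_d$ as a black box, in such a way that one step of $A$ corresponds to a block of $d+1$ rounds of $A_d$, and the linear loss $\bloss_\tau^\top \bq_\tau$ of $A$ at step $\tau$ equals (up to the factor $d+1$) the composite loss paid by $A_d$ over the $\tau$-th block. The natural construction: partition the horizon $T$ of the composite problem into $N := T/(d+1)$ consecutive blocks, each of length $d+1$. In block $\tau$ (rounds $(\tau-1)(d+1)+1,\dots,\tau(d+1)$) the simulator asks $A$ for its action $\bq_\tau\in\Delta_K$, treats $\bq_\tau$ as a distribution over arms, and feeds $A_d$ the randomization it dictates; the composite losses observed across that block are aggregated and handed back to $A$ as its scalar feedback. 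The key design choice is to pick the loss components $\loss_t^{(s)}(i)$ so that the \emph{total} composite loss charged to $A_d$ within block $\tau$, when $A_d$ plays arm $i$ throughout, equals $(d+1)\,\bloss_\tau(i)$ — this is exactly the linearization one wants, and it is consistent with the $[0,1]$ budget on $\loss_t = \sum_s \loss_t^{(s)}$ because $(d+1)\bloss_\tau(i)$ is being spread over $d+1$ rounds.

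First I would set up the block decomposition and, for a given oblivious linear-bandit instance $\bloss_1,\dots,\bloss_N$, explicitly define the composite instance: within block $\tau$, put all of arm $i$'s mass into components chosen so that $\sum_{t\in\text{block }\tau}\lcirc_t(i,\dots,i) = (d+1)\bloss_\tau(i)$ while each per-round true loss $\loss_t(i)\le 1$; e.g., set $\loss^{(0)}_t(i)=\bloss_{\tau}(i)$ for each of the $d+1$ rounds $t$ in block $\tau$ and all other components zero — then the composite loss at round $t$ is $\sum_{s} \loss^{(s)}_{t-s}$, which telescopes cleanly across a block because on the boundary the contributions from the previous block's tail components must be accounted for, but with only an $s=0$ component active there is no spillover at all. (If the adversary for $A$ is allowed to be adaptive one has to be slightly more careful, but the statement is about the oblivious worst case, so this is fine.) Next I would define $A$: at step $\tau$, receive $\bq_\tau$ from the (hypothetical) linear algorithm — wait, more precisely, $A$ is the algorithm we are \emph{building}, so $A$ internally runs $A_d$: it reads $A_d$'s current distribution $\bp_t$, and since the adversary for the linear problem reveals $\bloss_\tau^\top\bq_\tau$ and we need to give $A_d$ bandit feedback, we let $A_d$ actually sample an arm and play it, observe the composite feedback, and pass it along; at the end of the block we return $\sum_{t\in\text{block }\tau}\lcirc_t(I_{t-d},\dots,I_t)$ divided by... no, we return it as the loss and note $\E[\text{block composite loss}] = (d+1)\,\bloss_\tau^\top\bp_{\text{block}}$.

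The regret bookkeeping is then the heart of the argument. Summing over blocks,
\[
\E\!\left[\sum_{\tau=1}^{N}\ (\text{composite loss of }A_d\text{ in block }\tau)\right] - \min_i \sum_{\tau=1}^{N} (d+1)\,\bloss_\tau(i)
= \cR_T(A_d,K,d)\ \text{(worst case, }\ge\text{)},
\]
while the left-hand side equals $(d+1)\Bigl(\E\sum_\tau \bloss_\tau^\top \bq_\tau - \min_i \sum_\tau \bloss_\tau^\top\be_i\Bigr) = (d+1)\,\Rlin_{N}(A)$ for the constructed $A$ on this instance, up to lower-order terms from block boundaries. Taking the worst case over instances, since the construction realizes every linear instance, gives $\cR_T(A_d,K,d)\ge (d+1)\,\cRlin_{T/(d+1)}(A,\Delta_K)$.

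\textbf{Main obstacle.} The delicate point is the boundary between blocks: a composite loss observed at round $t$ near the start of block $\tau$ contains components $\loss^{(s)}_{t-s}$ with $t-s$ in block $\tau-1$, so naively the ``loss of block $\tau$'' mixes arms from two different linear steps, and the clean identity ``block composite loss $=(d+1)\bloss_\tau^\top(\cdot)$'' fails. This is why the choice to concentrate each block's losses in the $s=0$ component (no delay within the block construction, even though $A_d$ believes delays up to $d$ are possible) is essential — it kills all cross-block spillover, so the composite loss at every round $t$ in block $\tau$ is simply $\bloss_\tau(I_t)$ and summing $d+1$ of them gives exactly $(d+1)\bloss_\tau^\top\bp$. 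I would double-check that $A_d$ cannot "detect" that the delay is effectively zero and thereby escape the lower bound: it cannot, because $A_d$ must work against \emph{every} admissible loss-component sequence, and ours is admissible. A second, minor obstacle is making sure $A$ is a legitimate linear-bandit algorithm — it only ever uses the scalar feedback $\bloss_\tau^\top\bq_\tau$, which it can reconstruct from the composite losses it observes — and that the factor-$(d+1)$ inflation of the loss range is handled by scaling (or simply noting $\bloss_\tau(i)\in[0,1]$ so the composite instance is valid as is).
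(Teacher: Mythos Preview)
Your block decomposition and the overall regret bookkeeping are right, but the specific composite instance you build has a fatal information-flow problem. With $\loss_t^{(0)}(i)=\bloss_\tau(i)$ and all other components zero, the composite feedback at \emph{every} round $t$ of block $\tau$ is $\bloss_\tau(I_t)$. Hence $A_d$'s choice of $I_{t+1},I_{t+2},\dots$ inside block $\tau$ depends on $\bloss_\tau$ through this feedback, and so does the empirical distribution $\bq_\tau$. That means the ``algorithm'' $A$ which outputs $\bq_\tau$ is not a valid linear bandit algorithm: it peeks at $\bloss_\tau$ before committing its action. Put differently, $A$ cannot simulate $A_d$ here, because to advance $A_d$ by one round inside block $\tau$ it must hand over the scalar $\bloss_\tau(I_t)$, yet $A$ has no access to $\bloss_\tau$ until after it has already played $\bq_\tau$ and received the single number $\bloss_\tau^\top\bq_\tau$. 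Your ``second, minor obstacle'' is therefore not minor at all, and your $s=0$ trick---while it does kill cross-block spillover---creates a worse within-block leakage.

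The paper fixes this by pushing all the loss mass of block $\tau$ to its \emph{last} round via the delay components: set $\loss_t^{(s)}(i)=\bloss_{\lceil t/(d+1)\rceil}(i)$ when $t+s\equiv 0 \pmod{d+1}$ and $0$ otherwise. Then $\lcirc_t=0$ at every $t\not\equiv 0\pmod{d+1}$, and at $t=\tau(d+1)$ one gets $\lcirc_t=(d+1)\,\bloss_\tau^\top\bq_\tau$ with $\bq_\tau$ the empirical distribution of $I_{t-d},\dots,I_t$. Now all of $A_d$'s actions in block $\tau$ are determined by zeros plus feedback from earlier blocks, which $A$ already possesses; the single nonzero feedback $A_d$ needs at round $\tau(d+1)$ is exactly $(d+1)$ times the scalar $A$ observes after playing $\bq_\tau$. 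This simultaneously handles the boundary issue you flagged and the simulation issue you missed.
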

Our reduction, described in detail in the proof of the above lemma (see Appendix~\ref{s:lower-appe}), essentially builds the probability vectors $\bq_t$ played by $A$ based on the empirical distribution of actions played by $A_d$ during blocks of size $d+1$. Now, an additional lemma is needed (whose proof is also given in the Appendix~\ref{s:lower-appe}).
\begin{lemma}
\label{l:shamir}
The regret of any algorithm $A$ for linear bandits on $\Delta_K$ satisfies $\cRlin_T(A,\Delta_K) = \widetilde{\Omega}\big(\sqrt{KT}\big)$.
\end{lemma}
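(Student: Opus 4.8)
The plan is to establish the lower bound $\cRlin_T(A,\Delta_K) = \widetilde{\Omega}(\sqrt{KT})$ by reducing from the standard $K$-armed nonstochastic bandit problem, whose minimax regret is known to be $\Omega(\sqrt{KT})$ (up to logarithmic factors), following \citet{AuerCeFrSc02}. The key observation is that linear bandit optimization on $\Delta_K$ is \emph{at least as hard} as standard bandits: if the adversary restricts itself to loss vectors $\bloss_t \in \{0,1\}^K$ and we force the learner to play vertices of the simplex (or, more carefully, we simply run the linear bandit algorithm against an adversary derived from a hard bandit instance), the feedback $\bloss_t^\top \bq_t$ and the regret $\Rlin_T$ specialize appropriately.

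Concretely, I would proceed as follows. First, take any hard instance for standard $K$-armed bandits --- the classical construction where one arm has slightly smaller expected loss and losses are Bernoulli --- and consider feeding its (stochastically generated, or obliviously fixed after derandomization) loss vectors $\bloss_1,\dots,\bloss_T$ to the linear bandit algorithm $A$. At each round $A$ outputs $\bq_t \in \Delta_K$ and observes $\bloss_t^\top\bq_t$. Second, I would build a standard bandit player from $A$: at round $t$ draw $I_t \sim \bq_t$ and play $I_t$; the standard bandit player then observes $\bloss_t(I_t)$, from which it cannot directly reconstruct $\bloss_t^\top\bq_t$, so instead I run the reduction in the other direction --- I let the linear bandit adversary be the bandit adversary, note that $\E[\bloss_t(I_t) \mid \bq_t] = \bloss_t^\top\bq_t$, and observe that the expected regret of the induced bandit strategy equals $\Rlin_T$ (since $\min_i \sum_t \bloss_t^\top \be_i$ is the same benchmark in both problems by the second equality in \eqref{eq:lin-regret}). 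Since the induced bandit strategy is a legitimate bandit algorithm and can only observe the single coordinate $\bloss_t(I_t)$ --- which carries \emph{less} information than $\bloss_t^\top\bq_t$ --- the $\Omega(\sqrt{KT})$ lower bound for standard bandits applies, giving $\E[\Rlin_T] = \widetilde{\Omega}(\sqrt{KT})$ for the worst-case oblivious choice of losses, hence $\cRlin_T(A,\Delta_K) = \widetilde{\Omega}(\sqrt{KT})$.

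The one subtlety to get right is the direction of the information comparison: the linear bandit \emph{observes} $\bloss_t^\top \bq_t$, which is in general \emph{more} informative than a single coordinate $\bloss_t(I_t)$, so one cannot naively say "linear bandits are harder". The correct framing is that a linear bandit algorithm $A$, together with the sampling rule $I_t \sim \bq_t$, yields a randomized mapping from the observed scalar $\bloss_t^\top \bq_t$ to a played arm; but the regret of this composite object is exactly $\Rlin_T$ in expectation, and one shows directly via an information-theoretic (KL-divergence / Pinsker) argument on the two-arm sub-instances --- exactly as in the classical bandit lower bound --- that \emph{no} strategy observing only $\bloss_t^\top\bq_t$ (which, crucially, under the hard Bernoulli instance with a near-uniform $\bq_t$ reveals little about the identity of the good arm) can beat $\widetilde{\Omega}(\sqrt{KT})$. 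I expect the main obstacle to be making this information-theoretic step clean: one must verify that playing a mixture $\bq_t$ and seeing $\bloss_t^\top\bq_t$ does not leak enough information to circumvent the lower bound, which amounts to bounding the mutual information between the hidden good arm and the observed scalar feedback across $T$ rounds --- a bound that follows because each scalar observation contributes $O(1/K)$-scale information when $\bq_t$ is close to uniform, and an adversarial argument (or the standard "the algorithm must pull the good arm $\Omega(T/K)$ times to identify it, but then pays $\Omega(\sqrt{T/K})$ per distinguishable pair") closes the gap. Since the statement only claims $\widetilde\Omega$, logarithmic slack is available and the classical Exp3 lower bound machinery transfers essentially verbatim once the reduction is set up.
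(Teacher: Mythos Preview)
Your reduction argument in the second paragraph has a genuine gap. You build an ``induced bandit strategy'' by sampling $I_t \sim \bq_t$ and then assert that, since this strategy observes only $\bloss_t(I_t)$, the standard bandit lower bound applies to it and hence to $A$. But this strategy is \emph{not} a legitimate bandit algorithm: to produce $\bq_{t+1}$ it must internally run $A$, and $A$ requires the scalar $\bloss_t^\top\bq_t$ as feedback, which cannot be reconstructed from $\bloss_t(I_t)$ alone. In fact the simulation goes the wrong way for your purposes: a linear bandit that commits to a vertex $\bq_t = \be_{I_t}$ observes exactly $\bloss_t(I_t)$, so every standard bandit algorithm can be simulated by a linear bandit algorithm, which shows only that the minimax regret of linear bandits on $\Delta_K$ is \emph{at most} that of standard bandits. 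The $\Omega(\sqrt{KT})$ lower bound for standard bandits therefore does not transfer to linear bandits by a black-box reduction; a separate argument is required.

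You recognize the difficulty in your third paragraph and propose to run the KL/Pinsker machinery directly on the scalar observations $\bloss_t^\top\bq_t$. That is the right move, but at that point you are no longer reducing from standard bandits; you are proving a linear-bandit lower bound on $\Delta_K$ from scratch, and the step you flag as the ``main obstacle'' (controlling, uniformly over the algorithm's choice of $\bq_t$, the information about the hidden good arm leaked by each scalar observation) is precisely the substantive content of such a proof. The paper avoids all of this by directly invoking \citet[Theorem~5]{shamir2015complexity}, who establishes the $\widetilde{\Omega}(\sqrt{KT})$ lower bound for bandit linear optimization on $\Delta_K$ (first with Gaussian losses, then converted to $[0,1]$-valued losses at the cost of a $1/\sqrt{\ln T}$ factor, which is where the tilde comes from). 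Your fallback plan would essentially reprove Shamir's result; citing it is both correct and much shorter.
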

In the previous lemma, as well as the following result, the $\widetilde{\Omega}$ notation is only hiding a $\sqrt{\log T}$ denominator.
Using the two lemmas above we can finally prove the lower bound.
\begin{theorem}
\label{t:lower}
For any algorithm $A_d$ for $K$-armed bandits with $d$-delayed composite anonymous feedback,
$
\cR_T(A_d,K,d)=\widetilde{\Omega}\big(\sqrt{(d+1)KT}\big)
$.
\end{theorem}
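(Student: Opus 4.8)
The plan is to combine \Cref{lem:lower} and \Cref{l:shamir} in the obvious way. Given an arbitrary algorithm $A_d$ for $K$-armed bandits with $d$-delayed composite anonymous feedback, \Cref{lem:lower} produces a linear bandit algorithm $A$ on $\Delta_K$ such that
\[
    \cR_{T}(A_d,K,d) \ge (d+1)\,\cRlin_{T/(d+1)}(A,\Delta_K)\;.
\]
Then \Cref{l:shamir}, applied with horizon $T/(d+1)$, gives $\cRlin_{T/(d+1)}(A,\Delta_K) = \widetilde\Omega\big(\sqrt{K\,T/(d+1)}\big)$, where the $\widetilde\Omega$ hides only a $\sqrt{\log(T/(d+1))} \le \sqrt{\log T}$ denominator. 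Substituting,
\[
    \cR_{T}(A_d,K,d) \ge (d+1)\cdot\widetilde\Omega\big(\sqrt{K\,T/(d+1)}\big) = \widetilde\Omega\big(\sqrt{(d+1)KT}\big)\;,
\]
since $(d+1)\cdot\sqrt{K T/(d+1)} = \sqrt{(d+1)KT}$. As $A_d$ was arbitrary, this is the claimed bound.

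A couple of bookkeeping points I would address to make the argument airtight. First, \Cref{lem:lower} and the surrounding discussion assume $T$ is a multiple of $d+1$; for general $T$ one truncates to the largest multiple of $d+1$ below $T$, which costs only an additive $\cO(d+1)$ term and does not affect the asymptotic rate (this is noted in the excerpt just before \Cref{lem:lower}). Second, I should check that the $\widetilde\Omega$ from \Cref{l:shamir} is uniform enough that it can be invoked at horizon $T/(d+1)$ rather than $T$: the statement there is for arbitrary horizons, so this is immediate, and the logarithmic factor only improves (it is $\sqrt{\log(T/(d+1))}$, not $\sqrt{\log T}$). One should also observe implicitly that the reduction in \Cref{lem:lower} is valid regardless of the value of $d$, so the bound holds for every $d \ge 0$.

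I do not expect any real obstacle in this final step — it is a one-line chaining of two lemmas plus the arithmetic identity $(d+1)\sqrt{KT/(d+1)} = \sqrt{(d+1)KT}$. The mathematical content of the lower bound lives entirely in \Cref{lem:lower} (the reduction building $\bq_t$ from the empirical distribution of $A_d$ over blocks of length $d+1$, and the scaling factor $d+1$ relating the two regrets) and in \Cref{l:shamir} (the $\widetilde\Omega(\sqrt{KT})$ lower bound for linear bandits on the simplex, which is a known result). So the proof of \Cref{t:lower} itself is essentially just: \emph{"Immediate from \Cref{lem:lower} and \Cref{l:shamir}."}
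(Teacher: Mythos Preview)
Your proof is correct and follows exactly the same approach as the paper: apply \Cref{lem:lower} to pass from $A_d$ to a linear bandit algorithm $A$, then invoke \Cref{l:shamir} at horizon $T/(d+1)$ and use the arithmetic identity $(d+1)\sqrt{KT/(d+1)} = \sqrt{(d+1)KT}$. The paper's proof is indeed just this one-line chaining, and your additional bookkeeping remarks (divisibility by $d+1$, the logarithmic factor) are consistent with what the paper notes in the surrounding text.
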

\begin{proof}
Fix an algorithm $A_d$. Using the reduction of~Lemma~\ref{lem:lower} gives an algorithm $A$ such that
$
	\cR_T(A_d,K,d) \ge (d+1)\,\cRlin_{T/(d+1)}(A,\Delta_K) = \widetilde{\Omega}\big(\sqrt{(d+1)KT}\big)
$,
where we used Lemma~\ref{l:shamir} with horizon $T/(d+1)$ to prove the $\widetilde{\Omega}$-equality.
\end{proof}
Although the loss sequence used to prove the lower bound for linear bandits in the simplex is stochastic i.i.d., the loss sequence achieving the lower bound in our delayed setting is not independent due to the deterministic loss transformation in the proof of Lemma~\ref{lem:lower} (which is defined independently of the algorithm, thus preserving the oblivious nature of the adversary).

\section{Conclusions}
We have investigated the setting of $d$-delayed composite anonymous feedback as applied to nonstochastic bandits. Composite anonymous feedback lends itself to formalize scenarios where the actions performed by the online decision-maker produce long-lasting effects that combine additively over time. A general reduction technique was introduced that enables the conversion of a stable algorithm working in a standard bandit framework into one working in the composite feedback framework. 
Applying our reduction to the FTRL algorithm with Tsallis entropy, we obtain an upper bounded on the regret of order $\sqrt{(d+1)KT}$.
We then showed the optimality of this rate (up to a logarithmic factor) relying on a lower bound for bandit linear optimization in the probability simplex.


\acks{%
NCB and RC were partially supported by the MIUR PRIN grant Algorithms, Games, and Digital Markets (ALGADIMAR) and the EU Horizon 2020 ICT-48 research and innovation action under grant agreement 951847, project ELISE (European Learning and Intelligent Systems Excellence).
The work of TC has benefited from the AI Interdisciplinary Institute ANITI, which is funded by the French ``Investing for the Future – PIA3'' program under the Grant agreement ANR-19-P3IA-0004. TC also acknowledges the support of the project BOLD from the French national research agency (ANR), and that of IBM.
YM was supported in part by the European Research Council (ERC) under the European Union's Horizon 2020 research and innovation program (grant agreement No. 882396), by the Israel Science Foundation (grant number 993/17), Tel Aviv University Center for AI and Data Science (TAD), and the Yandex Initiative for Machine Learning at Tel Aviv University.
Finally, we would like to thank 
Mengxiao Zhang and Chen-Yu Wei for finding a mistake in the proof of the main result in the preliminary version of this work \citep{cesa2018nonstochastic}. See \Cref{s:issues} for more details on the corrections we made.
}


\newpage

\appendix
\section{An Accountants' Lemma}
\label{s:ragionieri}

The next elementary lemma can be proved straightforwardly by swapping the order of the sums (see \Cref{f:ragionieri}).
\begin{figure}
    \centering
    \begin{tikzpicture}[scale=0.9]
    \fill[green] (1,1) -- (3,1) -- (3,3) -- (2,3) -- (2,2) -- (1,2) -- cycle;
    \fill[orange!75!yellow] (3,1) -- (4,1) -- (4,2) -- (5,2) -- (5,3) -- (6,3) -- (6,4) -- (7,4) -- (7,7) -- (6,7) -- (6,6) -- (5,6) -- (5,5) -- (4,5) -- (4,4) -- (3,4) -- cycle;
    \fill[green] (7,5) -- (8,5) -- (8,6) -- (9,6) -- (9,7) -- (7,7) -- cycle;
    \foreach \i in {1,...,9}
    {
        \draw[gray,thin] (\i, 0) -- (\i, 7);
    }
    \foreach \i in {1,...,7}
    {
        \draw[gray,thin] (0, \i) -- (9, \i);
    }
    \draw[thick] (-0.5,0) -- (1,0);
        \draw[thick,green] (1,0) -- (3,0);
        \draw[thick,green] (7,0) -- (9,0);
        \draw[->,thick] (9,0) -- (9.5,0);
        \draw[thick,orange] (3,0) -- (7,0);
    \draw[->,thick] (0,-0.5) -- (0,7.5);
    \draw (9.5,0) node[right] {$t$};
    \draw (0,7.5) node[above] {$\tau$};
    \draw (0,1.5) node[left] {$a$};
    \draw (0,6.5) node[left] {$b+d$};
    \draw (1.5,0) node[below] {$a-d$};
    \draw (3.5,0) node[below] {$\vphantom{b}a$};
    \draw (4.5,0) node[below] {$\vphantom{b}s$};
    \draw (5.5,0) node[below] {$\vphantom{b}s+1$};    
    \draw (6.5,0) node[below] {$b$};
    \draw (8.5,0) node[below] {$b+d$};
    \foreach \i in {2.5,3.5,4.5}
        \draw (4.5,\i) node {$c_s$};
    \foreach \i in {3.5,4.5,5.5}
        \draw (5.5,\i) node {$c_{s+1}$};
    \end{tikzpicture}
    \caption{On the right-hand side of the equation in Lemma~\ref{l:ragionieri}, we are summing each row of the squares in the picture. On the left hand side, we are summing the columns. Each column $s$ contains the same constant $c_s$ in each component.}
    \label{f:ragionieri}
\end{figure}

\begin{lemma}
\label{l:ragionieri}
If $(c_t)_{t\in\Z} \subset \R$, $a,b \in \Z$ are such that $a \le b$ and $d \ge 0$ then
\[
    \sum_{t=a-d}^{a-1} (t-a+d+1)c_t + (d+1) \sum_{t=a}^b c_t + \sum_{t=b+1}^{b+d} (b+d+1-t)c_t 
= 
    \sum_{\tau = a}^{b+d} \sum_{t = \tau-d}^\tau c_t \;.
\]
\end{lemma}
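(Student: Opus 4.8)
The plan is to prove the identity by interchanging the order of summation on the right-hand side, exactly as suggested by Figure~\ref{f:ragionieri}. Concretely, I would start from $\sum_{\tau=a}^{b+d}\sum_{t=\tau-d}^{\tau} c_t$ and rewrite the double sum as a sum over pairs $(\tau,t)$ with $a\le\tau\le b+d$ and $\tau-d\le t\le\tau$. The constraint $\tau-d\le t\le\tau$ is equivalent to $t\le\tau\le t+d$, so for a fixed $t$ the inner index $\tau$ ranges over $\{\max(a,t),\dots,\min(b+d,t+d)\}$, and the overall range of $t$ is $a-d\le t\le b+d$. Swapping the sums therefore gives $\sum_{t=a-d}^{b+d} c_t\cdot N(t)$, where $N(t)=\#\{\tau: \max(a,t)\le\tau\le\min(b+d,t+d)\}=\bigl(\min(b+d,t+d)-\max(a,t)+1\bigr)^+$ is the number of valid $\tau$'s, i.e. the height of column $t$ in the figure.

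Next I would evaluate $N(t)$ on the three natural sub-ranges of $t$. For $a\le t\le b$ (the ``bulk''), we have $\min(b+d,t+d)=t+d$ since $t\le b$, and $\max(a,t)=t$ since $t\ge a$, so $N(t)=d+1$. For $a-d\le t\le a-1$ (the left ramp), $\max(a,t)=a$ and, using $t\ge a-d$ so that $t+d\ge a$ together with $t+d\le b+d$ (which holds since $t\le a-1\le b$), we get $\min(b+d,t+d)=t+d$, hence $N(t)=t+d-a+1=t-a+d+1\ge 1$. For $b+1\le t\le b+d$ (the right ramp), $\min(b+d,t+d)=b+d$, and $\max(a,t)=t$ since $t\ge b+1>a$, so $N(t)=b+d-t+1=b+d+1-t\ge 1$. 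Substituting these three expressions for $N(t)$ into $\sum_{t=a-d}^{b+d} c_t N(t)$ reproduces exactly the three sums on the left-hand side of the claimed identity, which completes the proof.

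A couple of small bookkeeping points I would make sure to handle cleanly. First, when $d=0$ the two ramp sums are empty by the paper's stated convention ($\sum_{t=m}^{n}=0$ for $m>n$), and the identity degenerates to the trivial $\sum_{t=a}^{b}c_t=\sum_{\tau=a}^{b}c_\tau$, so the statement is consistent with the $d=0$ case. Second, I should check that the sub-ranges $[a-d,a-1]$, $[a,b]$, $[b+1,b+d]$ are disjoint and cover $[a-d,b+d]$, which holds precisely because $a\le b$ (this is where the hypothesis is used); the ramp pieces don't overlap the bulk since $d$ could be large but the interval endpoints are by construction consecutive. The only mild obstacle is getting the $\min/\max$ case analysis for $N(t)$ right at the boundaries $t=a-1$, $t=a$, $t=b$, $t=b+1$, but since the piecewise formulas agree with the ``generic'' formula $N(t)=\min(b+d,t+d)-\max(a,t)+1$ on each closed sub-interval and that quantity is always nonnegative there, no positive-part truncation is actually triggered inside $[a-d,b+d]$, and the verification is entirely routine. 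There is no substantive difficulty here — the content of the lemma is just Fubini for a finite double sum over a ``staircase'' region.
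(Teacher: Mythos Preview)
Your proposal is correct and follows exactly the approach the paper indicates: swap the order of summation in the double sum and read off, for each fixed $t$, the count $N(t)$ of valid $\tau$'s (the column heights in Figure~\ref{f:ragionieri}), which yields the three pieces on the left-hand side. The paper itself gives no more detail than ``swap the order of the sums,'' so your write-up is in fact more explicit than the original.
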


\section{Stability of FTRL with Tsallis entropy}
\label{s:stab}

In this section, we prove a key stability property of FTRL with Tsallis entropy that could be of independent interest.
A general technique \cite[Lemma 2.10]{shalev2012online} to do so for the FTRL family of algorithms is to show that the regularizers are $\mu$-strongly convex with respect to the desired norm (in our case, the $\ell^1$-norm).
To the best of our knowledge, the existing results in this direction (e.g., \citealt[Section 7.3]{response}) lead to a (probably loose) upper bound on $1/\mu$ of order $K$, which in turn yields a suboptimal dependence on $K$ in the stability of FTRL with Tsallis entropy. 
To obtain the correct dependence on $K$ in the regret in Theorem 4, stability of order $\sqrt K$ or better is required instead.
If one wanted to follow this path, it would therefore be required to prove the tighter upper bound $1/\mu \lesssim \sqrt K$, which seems non-trivial.
Instead, we take a different route skipping this middle step and controlling the stability of FTRL with Tsallis entropy directly.

We begin by showing that, when $\eta>0$, there exists a unique solution of the optimization problem that defines $\bq_n$ in \Cref{a:jake} and provide a formula for it in terms of the corresponding Lagrange multiplier $\lambda$.
An analogous result was stated in \cite[Section~3.3]{zimmert2021tsallis} for the related algorithm Tsallis-INF.
\begin{lemma}
\label{l:lambda}
Let $\bw \in \R^K$ and $\eta >0$. Then 
\[
    \exists! \lambda_{\bw,\eta} < \min_{i\in [K]}\bw(i) 
    \;, \quad 
    \sum_{i\in[K]}\frac{\eta^2}{\brb{ \bw(i) - \lambda_{\bw,\eta} }^2} = 1 \;.
\]
Furthermore, defining for all $i\in[K]$,
\[
    \bq^{\mathrm{opt}}_{\bw,\eta}(i)
\coloneqq
    \frac{\eta^2}{\brb{ \bw(i)-\lambda_{\bw,\eta} }^2} \;,
\]
we have that $\bq^{\mathrm{opt}}_{\bw,\eta}$ is the unique global minimizer of the function:
\[
    \Delta_K \to \R
    \;, \quad
    \bq \mapsto \sum_{i\in[K]} \bw(i)\bq(i) - 2 \eta \sum_{i\in[K]} \sqrt{\bq(i)} \;.
\]
\end{lemma}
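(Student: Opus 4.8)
The plan is to break the statement into three parts: (i) produce the multiplier $\lambda_{\bw,\eta}$ and observe that the vector $\bq^{\mathrm{opt}}_{\bw,\eta}$ it defines automatically lies in $\Delta_K$; (ii) show the objective in the lemma is strictly convex on $\Delta_K$, so it has a unique minimizer; and (iii) verify by a first-order argument at the explicit point $\bq^{\mathrm{opt}}_{\bw,\eta}$ that this point is that minimizer.

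For (i) I would fix $\bw$ and $\eta>0$ and study the scalar function $g(\lambda) \coloneqq \sum_{i\in[K]} \eta^2/\brb{\bw(i)-\lambda}^2$ on the interval $\brb{-\infty,\ \min_{i\in[K]}\bw(i)}$. On this interval each summand is positive with derivative $2\eta^2/\brb{\bw(i)-\lambda}^3>0$, so $g$ is continuous and strictly increasing there; it tends to $0$ as $\lambda\to-\infty$ and to $+\infty$ as $\lambda$ approaches $\min_{i\in[K]}\bw(i)$ from the left (the summand indexed by a minimizing coordinate diverges). The intermediate value theorem then yields a unique $\lambda_{\bw,\eta}$ with $g(\lambda_{\bw,\eta})=1$, which is the first assertion; moreover, since $\bq^{\mathrm{opt}}_{\bw,\eta}(i)=\eta^2/\brb{\bw(i)-\lambda_{\bw,\eta}}^2>0$ for every $i$ and $\sum_{i} \bq^{\mathrm{opt}}_{\bw,\eta}(i)=g(\lambda_{\bw,\eta})=1$, the vector $\bq^{\mathrm{opt}}_{\bw,\eta}$ lies in the relative interior of $\Delta_K$.

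For (ii), writing $f(\bq)=\sum_i \bw(i)\bq(i)-2\eta\sum_i\sqrt{\bq(i)}$, note that $f$ is continuous on the compact set $\Delta_K$, hence attains a minimum; the linear part is affine and, since $\eta>0$ and $x\mapsto-2\eta\sqrt{x}$ is strictly convex on $[0,\infty)$, any two distinct points of $\Delta_K$ differ in some coordinate $i_0$ along which $f$ is strictly convex, so $f$ is strictly convex on $\Delta_K$ and its minimizer is unique. For (iii), at the interior point $\bq^{\mathrm{opt}}_{\bw,\eta}$ the function $f$ is differentiable with $\partial f/\partial\bq(i)=\bw(i)-\eta/\sqrt{\bq(i)}$; substituting $\sqrt{\bq^{\mathrm{opt}}_{\bw,\eta}(i)}=\eta/\brb{\bw(i)-\lambda_{\bw,\eta}}$ gives $\nabla f\brb{\bq^{\mathrm{opt}}_{\bw,\eta}}=\lambda_{\bw,\eta}\,\mathbf 1$. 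Since $f$ is convex and differentiable at this interior point, the gradient inequality yields $f(\bq)\ge f\brb{\bq^{\mathrm{opt}}_{\bw,\eta}}+\lambda_{\bw,\eta}\brb{\sum_i\bq(i)-\sum_i\bq^{\mathrm{opt}}_{\bw,\eta}(i)}=f\brb{\bq^{\mathrm{opt}}_{\bw,\eta}}$ for all $\bq\in\Delta_K$, so $\bq^{\mathrm{opt}}_{\bw,\eta}$ is a global minimizer, and by (ii) the only one.

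The one subtlety worth flagging — and the reason I would organize the argument this way instead of writing down the KKT system for the constrained problem directly — is that $f$ is \emph{not} differentiable on the boundary of $\Delta_K$, where the square roots have vertical tangents. This dictates two precautions: never write stationarity conditions at an a priori unidentified (possibly boundary) minimizer, but only invoke the gradient inequality at the explicitly interior point $\bq^{\mathrm{opt}}_{\bw,\eta}$; and obtain uniqueness from the strict convexity in (ii), which requires no smoothness at all. Everything else is routine.
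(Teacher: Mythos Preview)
Your proof is correct and follows essentially the same route as the paper: both analyze the scalar function $g(\lambda)=\sum_i \eta^2/(\bw(i)-\lambda)^2$ on $(-\infty,\min_i\bw(i))$ to obtain the unique multiplier, invoke strict convexity of the objective for uniqueness of the minimizer, and use the first-order optimality condition at the resulting interior point to identify it as the minimizer. The only cosmetic difference is that the paper packages step (iii) as an appeal to the Lagrange multiplier theorem (characterizing interior minimizers of a strictly convex function under an equality constraint), whereas you unfold this into the gradient inequality $f(\bq)\ge f(\bq^{\mathrm{opt}})+\nabla f(\bq^{\mathrm{opt}})^\top(\bq-\bq^{\mathrm{opt}})$; the content is identical.
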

\begin{proof}
Define the two auxiliary functions
\begin{align*}
    f_{\bw,\eta} \colon [0,\iop)^K \to \R \;, \qquad
&
    \bq \mapsto \sum_{i\in [K]} \bw(i) \bq(i) - 2 \eta \sum_{i\in [K] } \sqrt{\bq(i)}
\\
    \fhi \colon [0,\iop)^K \to \R\;, \qquad
&
    \bq \mapsto \sum_{i\in [K]} \bq(i)
\end{align*}
Note that $f_{\bw,\eta}$ is strictly convex and continuous on $[0,\iop)^K$ and differentiable on $(0,\iop)^K$.
Thus, by the Lagrange multiplier theorem, a point $\bq \in (0,\iop)^K$ is the unique global minimizer of $f_{\bw,\eta}$ on the simplex $\Delta_K$ if and only if 
\begin{equation}
    \label{e:lagra}
    \fhi(\bq) = 1
\qquad \text{ and } \qquad
    \exists \lambda \in \R, \
    \nabla f_{\bw,\eta}(\bq) = \lambda \nabla \fhi(\bq)
\end{equation}
A direct verification shows that a pair $(\bq,\lambda) \in (0,\iop)^K \times \R$ satisfies condition~\eqref{e:lagra} if and only if
\begin{equation}
    \label{e:equiv-min}
    \lambda < \min_{i\in [K]} \bw(i) \;,
\quad 
    \sum_{i\in[K]} \frac{ \eta ^2}{\brb{ \bw(i) - \lambda }^2} = 1 \;,
\quad \text{ and } \quad
    \forall i \in [K]\;, \ \bq(i) = \frac{ \eta ^2}{\brb{ \bw(i) - \lambda }^2}
\end{equation}
Note that, letting $m \coloneqq \min_{i\in [K]} \bw(i)$, the function
\[
    g\colon ( -\iop, m ) \to \R \;, \quad \lambda \mapsto \sum_{i\in[K]} \frac{ \eta ^2}{\brb{ \bw(i) - \lambda }^2}
\]
is continuous, strictly increasing, and it satisfies
\[
    \lim_{\lambda \to -\iop} g(\lambda) = 0
\qquad \text{ and } \qquad
    \lim_{\lambda \to m^-} g(\lambda) = +\iop
\]
hence, there exists a unique $\lambda_{\bw,\eta} \in (-\iop,m)$ such that $g(\lambda_{\bw,\eta})=1$.
Therefore, $\bq^{\mathrm{opt}}_{\bw,\eta}$ is the unique global minimizer of $f_{\bw,\eta}$ on the simplex $\Delta_K$.
\end{proof}
This lemma controls the variation of the Lagrange multipliers corresponding to two points that vary by quantity $\delta$ only in a single coordinate.
\begin{lemma}
\label{l:diff_lambda}
Let $\bw\in\R^K$ such that $\bw(1)\le\dots\le\bw(K)$ and $\eta>0$.
Then, for all $\delta>0$ and $j\in[K]$, 
\[
    \lambda_{\bw+\delta\be_j,\eta} - \lambda_{\bw,\eta}
\le
    \frac 1 j \delta \;,
\]
where $\lambda_{\bw+\delta\be_j,\eta}$ and $\lambda_{\bw,\eta}$ are defined as in Lemma~\ref{l:lambda} and $\be_1,\dots,\be_K$ is the canonical basis of $\R^K$.
\end{lemma}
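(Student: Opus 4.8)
The plan is to reduce the inequality to a single pointwise estimate for the function defining the Lagrange multiplier of the perturbed weight vector. Write $\bw' \coloneqq \bw + \delta\be_j$, $\lambda_0 \coloneqq \lambda_{\bw,\eta}$, and recall from \Cref{l:lambda} (applied to $\bw'$) that $\lambda_{\bw',\eta}$ is the unique root in $(-\iop,\min_i\bw'(i))$ of the continuous, strictly increasing function $g\colon \lambda \mapsto \sum_{i\in[K]}\eta^2/(\bw'(i)-\lambda)^2$. Hence it suffices to prove $g(\lambda_0+\delta/j)\ge 1$: by strict monotonicity this forces $\lambda_{\bw',\eta}\le\lambda_0+\delta/j$, which is the claim. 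We may moreover assume $\lambda_0+\delta/j<\min_i\bw'(i)$, since otherwise $\lambda_{\bw',\eta}<\min_i\bw'(i)\le\lambda_0+\delta/j$ and there is nothing to prove. Under this assumption, set $a_i\coloneqq\bw(i)-\lambda_0>0$ (so that $a_1\le\dots\le a_K$ and $\sum_i\eta^2/a_i^2=1$ by the defining property of $\lambda_0$) and $t\coloneqq\delta/j$. Since $\lambda_0+t<\min_i\bw'(i)\le\bw(i)$ for every $i\ne j$, one has $a_i-t>0$ there, and a direct rearrangement (using $\sum_i\eta^2/a_i^2=1$) shows that $g(\lambda_0+t)\ge 1$ is equivalent to
\[
    \sum_{i\ne j}\eta^2\lrb{\frac{1}{(a_i-t)^2}-\frac{1}{a_i^2}}
\ge
    \eta^2\lrb{\frac{1}{a_j^2}-\frac{1}{(a_j+\delta-t)^2}}\;.
\]

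Next I would estimate the two sides by elementary one-variable inequalities. Since $x\mapsto x^{-2}$ has derivative $-2x^{-3}$ and $0<a_i-t\le a_i$, integrating gives $\frac{1}{(a_i-t)^2}-\frac{1}{a_i^2}\ge 2t/a_i^3$ for each $i\ne j$ (equivalently $(1-u)^{-2}-1\ge 2u$ with $u=t/a_i\in[0,1)$); symmetrically, since $\delta-t\ge 0$, one has $\frac{1}{a_j^2}-\frac{1}{(a_j+\delta-t)^2}\le 2(\delta-t)/a_j^3$. Plugging these bounds in, it is enough to establish $t\sum_{i\ne j}a_i^{-3}\ge(\delta-t)\,a_j^{-3}$. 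Using $t=\delta/j$ and $\delta-t=(j-1)\delta/j$, this is exactly $\sum_{i\ne j}a_i^{-3}\ge(j-1)\,a_j^{-3}$, and this holds because the $j-1$ indices $i\in\{1,\dots,j-1\}$ satisfy $a_i\le a_j$ — this is precisely where the hypothesis $\bw(1)\le\dots\le\bw(K)$ enters — so each contributes a term $a_i^{-3}\ge a_j^{-3}$.

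The only genuinely non-mechanical point is the choice of the test point $\lambda_0+\delta/j$ and the observation that monotonicity of $g$ converts a one-point inequality into the desired bound on its root; the remaining work is routine calculus. One should also take a moment to verify that the test point lies in the domain $(-\iop,\min_i\bw'(i))$ of $g$ — hence the trivial case is peeled off first — and that all denominators $a_i-t$ with $i\ne j$ stay positive there, which follows from $\lambda_0+t<\min_i\bw'(i)\le\bw(i)$. (The case $j=1$ is automatically covered: both sides of the rearranged inequality vanish.)
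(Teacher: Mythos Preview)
Your proof is correct and takes a genuinely different route from the paper's. The paper computes the partial derivative $D_j\Lambda(\bu)$ of the map $\bu\mapsto\lambda_{\bu,\eta}$ via the implicit function theorem, obtaining
\[
D_j\Lambda(\bu)=\frac{1}{1+\sum_{i\ne j}\Brb{\frac{\bu(j)-\lambda_{\bu,\eta}}{\bu(i)-\lambda_{\bu,\eta}}}^3}\;,
\]
and then integrates along the segment $s\mapsto\bw+s\delta\be_j$, bounding the integrand by $1/j$ using the ordering hypothesis. You instead guess the endpoint $\lambda_0+\delta/j$, plug it into the defining equation for $\lambda_{\bw',\eta}$, and use monotonicity of $g$ to convert a single pointwise inequality into the root bound; the ordering hypothesis enters at the same place (the $j-1$ indices $i<j$ each contribute $a_i^{-3}\ge a_j^{-3}$), but you avoid the implicit function theorem entirely and need only the convexity of $x\mapsto x^{-2}$. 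Your argument is more elementary and self-contained. One small trade-off: the paper's derivative formula also yields $D_j\Lambda>0$ as a by-product, which is invoked later in the proof of \Cref{t:stability} to show $\lambda_{\Lhat_{n-1}+\lhat_{n,j}}\ge\lambda_{\Lhat_{n-1}}$; your approach recovers this just as easily (evaluate $g$ at $\lambda_0$ and observe the $j$-th term only decreases), but you would need to state it separately if you were to replace the paper's proof.
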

\begin{proof}
Define $\cX \coloneqq \bcb{ \brb{ \bu(1), \dots, \bu(K),\lambda } \in \R^K\times\R \mid \lambda < \min_{i\in [K]} \bu(i) }$ and
\begin{align*}
    \psi  \colon & \cX \to \R\;, \quad \brb{ \bu(1),\dots,\bu(K), \lambda } \mapsto \sum_{i\in[K]} \frac{\eta^2}{\brb{ \bu(i) -\lambda }^2}
\\
    \Lambda \colon & \R^K \to \R\;, \quad \bu \mapsto \lambda_{\bu, \eta}
\end{align*}
where $\lambda_{\bu,\eta}$ is defined as in Lemma~\ref{l:lambda}.
By the implicit function theorem, we have that $\Lambda$ is infinitely differentiable and for all $\bu \in \R^K$, 
\begin{align*}
    D_j \Lambda (\bu)
&
=
    -\frac{D_j \psi \brb{\bu(1),\dots,\bu(K), \lambda_{\bu,\eta}} }{D_{K+1} \psi \brb{\bu(1),\dots,\bu(K), \lambda_{\bu,\eta}} }
\\
&
=
    -\frac{ -2\frac{ \eta^2 }{\lrb{ \bu(j) -\lambda_{\bu,\eta} }^3} }{ 2 \sum_{i\in[K]}\frac{ \eta^2 }{\lrb{ \bu(i) -\lambda_{\bu,\eta} }^3} }
=
    \frac 1 { 1 + \sum_{i\in[K], i\neq j} \lrb{ \frac{ \bu(j) - \lambda_{\bu,\eta} }{ \bu(i) - \lambda_{\bu,\eta}} }^3 } \;,
\end{align*}
where we denoted the partial derivative with respect to the $j$-th coordinate by $D_j$. 
Now, by the fundamental theorem of calculus,
\begin{align*}
&
    \lambda_{\bw+\delta\be_j,\eta} - \lambda_{\bw,\eta}
=
    \Lambda(\bw +\delta\be_j) - \Lambda(\bw)
=
    \delta\int_0^1 D_j \Lambda(\bw + s \delta \be_j) \dif s
\\
&
\quad
=
    \delta \int_0^1 \frac 1 { 
    1 + 
    \sum_{i\in[K], i \le j-1} \lrb{ \frac{ \bw(j) + s \delta - \lambda_{\bw + s \delta \be_j,\eta} }{ \bw(i) - \lambda_{\bw + s \delta \be_j,\eta}} }^3
    +
    \sum_{i\in[K], i \ge j+1} \lrb{ \frac{ \bw(j) + s \delta - \lambda_{\bw + s \delta \be_j,\eta} }{ \bw(i) - \lambda_{\bw + s \delta \be_j,\eta}} }^3 
    } \dif s
\\
&
\quad
\le
    \delta \int_0^1 \frac 1 { 
    1 + 
    \sum_{i\in[K], i \le j-1} \lrb{ \frac{ \bw(i) - \lambda_{\bw + s \delta \be_j,\eta} }{ \bw(i) - \lambda_{\bw + s \delta \be_j,\eta}} }^3
    } \dif s
=
    \frac 1 j \delta \;.
\end{align*}
\end{proof}
We can finally prove the stability of FTRL with $\frac 1 2$-Tsallis entropy.

\medskip

\begin{proof}\textbf{of \Cref{t:stability}}
Consider an arbitrary sequence of losses $(\ell_m)_{m\in \N} \s [0,1]$.
Fix any $\eta>0$. 
For each $\bw \in \R^K$, let $\lambda_{\bw} \coloneqq \lambda_{\bw,\eta}$, where $\lambda_{\bw,\eta}$ is defined as in Lemma~\ref{l:lambda}.
Let $\cF_0$ be the trivial $\sigma$-algebra (containing only the sample space and the empty set) and for all $n\in \N$, $\cF_n \coloneqq \sigma(J_1,\ldots J_n)$.
Note that, for each $n\in \N$, we have that $\Lhat_{n-1}$ is $\cF_{n-1}$-measurable and, as a consequence of Lemma~\ref{l:lambda}, for all $i\in[K]$,
\[
    \bq_n(i)
=
    \frac{ \eta^2 } { \brb{ \Lhat_{n-1}(i) - \lambda_{\Lhat_{n-1}} }^2 }\;.
\]
Let $\be_1,\dots,\be_K$ be the canonical basis of $\R^K$ and fix any $n\in \N$.
Define for each $j\in[K]$, $\lhat_{n,j} \coloneqq \frac{ \ell_n(j) }{\bq_n(j)} \be_j$ and note that $\lhat_n = \lhat_{n,J_n}$. To make the notation more compact, we also let $\E_n[\cdot] \coloneqq \E[\cdot \mid \cF_{n-1}]$.
Then
\begin{align*}
&
    \E_n \lsb{ \sum_{i\in[K]} \brb{ \bq_{n+1}(i) - \bq_n(i) }^+ }
=
    \E_n \lsb{ \sum_{i\in[K]} \lrb{ \frac{ \eta^2 } { \brb{ \Lhat_{n}(i) - \lambda_{\Lhat_{n}} }^2 } - \frac{ \eta^2 } { \brb{ \Lhat_{n-1}(i) - \lambda_{\Lhat_{n-1}} }^2 } }^+ }
\\
&
=
    \E_n \lsb{ \sum_{i\in[K]} \lrb{ \frac{ \eta^2 } { \brb{ \Lhat_{n-1}(i) + \lhat_{n,J_n}(i) - \lambda_{\Lhat_{n-1} + \lhat_{n,J_n}} }^2 } - \frac{ \eta^2 } { \brb{ \Lhat_{n-1}(i) - \lambda_{\Lhat_{n-1}} }^2 } }^+ }
\\
&
=
    \sum_{j\in[K]}\E_n \lsb{ \I\{J_n = j\} \sum_{i\in[K]} \lrb{ \frac{ \eta^2 } { \brb{ \Lhat_{n-1}(i) + \lhat_{n,j}(i) - \lambda_{\Lhat_{n-1} + \lhat_{n,j}} }^2 } - \frac{ \eta^2 } { \brb{ \Lhat_{n-1}(i) - \lambda_{\Lhat_{n-1}} }^2 } }^+ }
\\
&
=
    \sum_{j\in[K]}\E_n \bsb{ \I\{J_n = j\} } \sum_{i\in[K]} \lrb{ \frac{ \eta^2 } { \brb{ \Lhat_{n-1}(i) + \lhat_{n,j}(i) - \lambda_{\Lhat_{n-1} + \lhat_{n,j}} }^2 } - \frac{ \eta^2 } { \brb{ \Lhat_{n-1}(i) - \lambda_{\Lhat_{n-1}} }^2 } }^+
\\
&
=
    \sum_{j\in[K]} \bq_n(j) \sum_{i\in[K]} \lrb{ \frac{ \eta^2 } { \brb{ \Lhat_{n-1}(i) + \lhat_{n,j}(i) - \lambda_{\Lhat_{n-1} + \lhat_{n,j}} }^2 } - \frac{ \eta^2 } { \brb{ \Lhat_{n-1}(i) - \lambda_{\Lhat_{n-1}} }^2 } }^+
=
    (\star)\;.
\end{align*}
Now, note that, for each $j \in [K]$:
\begin{itemize}
    \item $\lambda_{\Lhat_{n-1} + \lhat_{n,j}} \ge \lambda_{\Lhat_{n-1}}$ (because, as we show in the proof of Lemma~\ref{l:diff_lambda}, the directional derivatives of $\bu \mapsto \lambda_{\bu}$ are positive).
    \item For each $i \in [K] \backslash \{j\}$, 
    $
        \frac{\eta^2}{\Lhat_{n-1}(i) - \lambda_{\Lhat_{n-1}}}
    \le
        \frac{\eta^2}{\Lhat_{n-1}(i) - \lambda_{\Lhat_{n-1} + \lhat_{n,j}}}
    = 
        \frac{\eta^2}{\Lhat_{n-1}(i) + \lhat_{n,j}(i) - \lambda_{\Lhat_{n-1} + \lhat_{n,j}}} $ (by the previous point).
    \item 
    $
        \frac{\eta^2}{\Lhat_{n-1}(j) - \lambda_{\Lhat_{n-1}}} 
    \ge 
        \frac{\eta^2}{\Lhat_{n-1}(j) + \lhat_{n,j}(j) - \lambda_{\Lhat_{n-1} + \lhat_{n,j}}}
    $ (by the previous point and the fact that $\sum_{i\in[K]} \frac{\eta^2}{\Lhat_{n-1}(i) - \lambda_{\Lhat_{n-1}}} = 1 = \sum_{i\in[K]} \frac{\eta^2}{\Lhat_{n-1}(i) + \lhat_{n,j}(i) - \lambda_{\Lhat_{n-1} + \lhat_{n,j}}} $ ).
\end{itemize}
It follows that
\begin{align*}
&
    (\star)
=
    \sum_{j\in[K]} \bq_n(j) \sum_{i\in[K], i \neq j} \lrb{ \frac{ \eta^2 } { \brb{ \Lhat_{n-1}(i) - \lambda_{\Lhat_{n-1} + \lhat_{n,j}} }^2 } - \frac{ \eta^2 } { \brb{ \Lhat_{n-1}(i) - \lambda_{\Lhat_{n-1}} }^2 } }
\\
&
\hspace{8.92575pt}
=
    \eta^2 \sum_{j\in[K]} \bq_n(j) \sum_{i\in[K], i \neq j} \frac{ \brb{ \Lhat_{n-1}(i) - \lambda_{\Lhat_{n-1}} }^2 - \brb{ \Lhat_{n-1}(i) - \lambda_{\Lhat_{n-1} + \lhat_{n,j}} }^2 } { \brb{ \Lhat_{n-1}(i) - \lambda_{\Lhat_{n-1} + \lhat_{n,j}} }^2 \brb{ \Lhat_{n-1}(i) - \lambda_{\Lhat_{n-1}} }^2 } 
\\
&
\hspace{8.92575pt}
=
    \eta^2 \sum_{j\in[K]} \bq_n(j) (\lambda_{\Lhat_{n-1} + \lhat_{n,j}} - \lambda_{\Lhat_{n-1}}) \sum_{i\in[K], i \neq j}  \frac{ \brb{ \Lhat_{n-1}(i) - \lambda_{\Lhat_{n-1}} } + \brb{ \Lhat_{n-1}(i) - \lambda_{\Lhat_{n-1} + \lhat_{n,j}} } } { \brb{ \Lhat_{n-1}(i) - \lambda_{\Lhat_{n-1} + \lhat_{n,j}} }^2 \brb{ \Lhat_{n-1}(i) - \lambda_{\Lhat_{n-1}} }^2 }
\\
&
\hspace{8.92575pt}
\le
    2\eta^2 \sum_{j\in[K]} \bq_n(j) (\lambda_{\Lhat_{n-1} + \lhat_{n,j}} - \lambda_{\Lhat_{n-1}}) \sum_{i\in[K], i \neq j}  \frac{1} { \brb{ \Lhat_{n-1}(i) - \lambda_{\Lhat_{n-1} + \lhat_{n,j}} }^3 }
\\
&
\hspace{8.92575pt}
\le
    2\eta^2 \sum_{j\in[K]} \bq_n(j) (\lambda_{\Lhat_{n-1} + \lhat_{n,j}} - \lambda_{\Lhat_{n-1}}) \lrb{ \sum_{i\in[K], i \neq j} \frac{1} { \brb{ \Lhat_{n-1}(i) - \lambda_{\Lhat_{n-1} + \lhat_{n,j}} }^2 } }^{3/2}
\\
&
\hspace{8.92575pt}
=
    2\eta^2 \sum_{j\in[K]} \bq_n(j) (\lambda_{\Lhat_{n-1} + \lhat_{n,j}} - \lambda_{\Lhat_{n-1}}) \lrb{ \sum_{i\in[K], i \neq j} \frac{1} { \brb{ \Lhat_{n-1}(i) + \lhat_{n,j}(i) - \lambda_{\Lhat_{n-1} + \lhat_{n,j}} }^2 } }^{3/2}
\\
&
\hspace{8.92575pt}
\le
    2\eta^2 \sum_{j\in[K]} \bq_n(j) (\lambda_{\Lhat_{n-1} + \lhat_{n,j}} - \lambda_{\Lhat_{n-1}}) \lrb{ \sum_{i\in[K]} \frac{1} { \brb{ \Lhat_{n-1}(i) + \lhat_{n,j}(i) - \lambda_{\Lhat_{n-1} + \lhat_{n,j}} }^2 } }^{3/2}
\\
&
\hspace{8.92575pt}
=
    \frac{2}{\eta} \sum_{j\in[K]} \bq_n(j) (\lambda_{\Lhat_{n-1} + \lhat_{n,j}} - \lambda_{\Lhat_{n-1}}) = (\star\star) \;.
\end{align*}
Now, let $\sigma$ be a random permutation of $[K]$ such that $\Lhat_{n-1}\brb{\sigma(1)}\le \dots \le \Lhat_{n-1}\brb{\sigma(K)}$. Then, using Lemma~\ref{l:diff_lambda}, we have
\begin{align*}
    (\star\star)
&
=
    \frac{2}{\eta} \sum_{j\in[K]} \bq_n\brb{\sigma(j)} (\lambda_{\Lhat_{n-1} + \lhat_{n,\sigma(j)}} - \lambda_{\Lhat_{n-1}})
\\
& =
    \frac{2}{\eta} \sum_{j\in[K]} \bq_n\brb{\sigma(j)} \lrb{ \lambda_{\Lhat_{n-1} + \frac{ \ell_n (\sigma(j)) }{ \bq_n (\sigma(j)) } \be_{\sigma(j)} } - \lambda_{\Lhat_{n-1}} }
\\
&
\le
    \frac{2}{\eta} \sum_{j\in[K]} \bq_n\brb{\sigma(j)} \frac{1}{j} \frac{ \ell_n (\sigma(j)) }{ \bq_n (\sigma(j)) }
\le
    \frac{2}{\eta} \sum_{j\in[K]} \frac{1}{j}
\le
    2 \frac{1+\ln K}{\eta} \;.
\end{align*}
In conclusion:
\[
    \E \lsb{ \sum_{i\in[K]} \brb{ \bq_{n+1}(i) - \bq_n(i) }^+ \mid \cF_{n-1} }
\le
    2 \frac{1+\ln K}{\eta} \;.
\]
It follows that:
\[
    \E \lsb{ \sum_{i\in[K]} \brb{ \bq_{n+1}(i) - \bq_n(i) }^+ }
=
    \E \lsb{ \E \lsb{ \sum_{i\in[K]} \brb{ \bq_{n+1}(i) - \bq_n(i) }^+ \mid \cF_{n-1} } }
\le
    2 \frac{1+\ln K}{\eta}  \;.
\]
Being $n$ and $(\ell_m)_{m\in \N}$ arbitrary, the result follows.
\end{proof}

\section{Stability of Exp3}
\label{s:stabExp3}

In this section, we prove the stability of Exp3.

\begin{lemma}
\label{l:stabExp3}
Exp3 with learning rate $\eta$ is $\xi$-stable with $\xi=\eta$.
\end{lemma}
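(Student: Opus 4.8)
The plan is to argue directly from the exponential-weights update of Exp3, and to obtain the bound pathwise so that no concentration or martingale argument is needed. Recall that Exp3 with learning rate $\eta$ draws $J_n\sim\bq_n$, forms the importance-weighted loss estimate $\lhat_n(i)=\frac{\ell_n(i)}{\bq_n(i)}\I\{J_n=i\}$, and sets $\bq_{n+1}(i)=\bq_n(i)\,e^{-\eta\lhat_n(i)}/Z$ with $Z=\sum_{j\in[K]}\bq_n(j)\,e^{-\eta\lhat_n(j)}$. First I would exploit that $\lhat_n$ is supported on the single coordinate $J_n$, so that $Z=1-\bq_n(J_n)\brb{1-e^{-\eta\lhat_n(J_n)}}$. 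Since $\lhat_n(J_n)\ge 0$ this gives $1-\bq_n(J_n)\le Z\le 1$, hence $\bq_{n+1}(i)=\bq_n(i)/Z\ge\bq_n(i)$ for every $i\neq J_n$; and a one-line computation (below) shows $\bq_{n+1}(J_n)\le\bq_n(J_n)$.

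The second step is to note that, thanks to this coordinatewise monotonicity, the positive part collapses to a single term:
\[
    \sum_{i\in[K]}\brb{\bq_{n+1}(i)-\bq_n(i)}^+
=\sum_{i\neq J_n}\brb{\bq_{n+1}(i)-\bq_n(i)}
=\bq_n(J_n)-\bq_{n+1}(J_n),
\]
where the last equality uses that $\bq_n$ and $\bq_{n+1}$ both sum to $1$. It then only remains to bound $\bq_n(J_n)-\bq_{n+1}(J_n)$. Substituting the update formula and simplifying yields $\bq_n(J_n)-\bq_{n+1}(J_n)=\bq_n(J_n)\brb{1-\bq_n(J_n)}\brb{1-e^{-\eta\lhat_n(J_n)}}/Z$ (which, being nonnegative, also establishes the claim $\bq_{n+1}(J_n)\le\bq_n(J_n)$ used above); then, bounding $Z\ge 1-\bq_n(J_n)$ and $1-e^{-x}\le x$,
\begin{align*}
    \bq_n(J_n)-\bq_{n+1}(J_n)
&\le\bq_n(J_n)\brb{1-e^{-\eta\lhat_n(J_n)}}\\
&\le\bq_n(J_n)\,\eta\,\lhat_n(J_n)
=\eta\,\ell_n(J_n)
\le\eta .
\end{align*}
Since this holds deterministically, taking expectations gives $\E\bsb{\sum_{i\in[K]}\brb{\bq_{n+1}(i)-\bq_n(i)}^+}\le\eta$, i.e.\ $\xi\le\eta$.

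I do not anticipate a genuine obstacle here: the only point requiring a little care is verifying the coordinatewise monotonicity of the update — in particular that the probability of the played arm does not increase — which is exactly what licenses dropping the positive-part operator; everything after that is the two elementary estimates above. (If a slightly sharper constant were wanted, one could condition on the history before round $n$ prior to bounding $\ell_n(J_n)\le 1$, replacing the final $\eta$ by $\eta\sum_{i\in[K]}\bq_n(i)\ell_n(i)$; this refinement is not needed for the stated lemma.)
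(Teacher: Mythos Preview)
Your proof is correct. Both your argument and the paper's are pathwise and elementary, but you take a slightly different route. The paper invokes a per-coordinate bound (from \cite{cgmm16}) of the form $\bq_{n+1}(i)-\bq_n(i)\le\eta\,\bq_{n+1}(i)\sum_{j}\bq_n(j)\lhat_n(j)=\eta\,\bq_{n+1}(i)\ell_n(J_n)$ and then sums it over the coordinates with positive increment. You instead exploit the single-coordinate support of $\lhat_n$ to pin down exactly which coordinates go up and which go down, use the sum-to-one constraint to collapse $\sum_i(\bq_{n+1}(i)-\bq_n(i))^+$ to the single term $\bq_n(J_n)-\bq_{n+1}(J_n)$, and bound that term directly with $1-e^{-x}\le x$. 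Both approaches land on the same final bound $\eta\,\ell_n(J_n)\le\eta$; yours is self-contained and avoids the external citation, while the paper's cited inequality would also cover variants of the estimator that are not supported on a single coordinate.
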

\begin{proof}
Consider an arbitrary sequence of losses $(\ell_n)_{n\in \N} \s [0,1]$.
In this case, stability holds pointwise (for all realizations of the actions $J_1,J_2,\dots$ played by Exp3 on the sequence of losses $(\ell_n)_{n\in \N}$) rather that in expectation. From~\cite[Lemma 1]{cgmm16} we have, for any round $n\in \N$ and all arms $i\in [K]$,
\[
	\bq_{n+1}(i) - \bq_n(i) \leq \eta\,\bq_{n+1}(i) \sum_{j=1}^K \bq_n(j) \lhat_n(j)
\]
where $\lhat_n(j)=\frac{\ell_n(j) \I\{J_n=j\}}{\bq_n(j)}$, $\bq_n(j)=\bw_n(j)/\sum_{k=1}^K \bw_n(k)$, and if $n=1$, $\bw_n(k)=1$ while, if $n \ge 2$, $\bw_n(k)$ is defined inductively by $\bw_{n}(k) = \bq_{n-1}(k) e^{-\eta \lhat_{n-1}(k)}$.
Hence we can write, for any $n\in \N$,
\begin{align*}
\sum_{i\,:\,\bq_{n+1}(i) > \bq_n(i)} \brb{ \bq_{n+1}(i)-\bq_n(i) }
&\leq
\sum_{i\,:\,\bq_{n+1}(i) > \bq_n(i)} \eta\,\bq_{n+1}(i) \sum_{j=1}^K \bq_n(j) \lhat_n(j)\\
&=
\sum_{i\,:\,\bq_{n+1}(i) > \bq_n(i)} \eta\,\bq_{n+1}(i) \ell_n(J_n)
\leq
\eta\,\sum_{i\,:\,\bq_{n+1}(i) > \bq_n(i)} \bq_{n+1}(i)
\leq
\eta \;.
\end{align*}
Being $(\ell_n)_{n\in \N}$ arbitrary, the result follows.
\end{proof}

\section{Lower Bound (missing proofs)}
\label{s:lower-appe}

We begin this section by showing a reduction mapping each algorithm for bandits with composite anonymous feedback to one for linear bandits with a better or equal regret.

\medskip

\begin{proof}\textbf{of Lemma \ref{lem:lower}}
Fix an instance $\bloss_1,\dots,\bloss_{T/(d+1)}$ of a linear bandit problem and use it to construct an instance of the $d$-delayed bandit setting with loss components
\[
    \loss_t^{(s)}(i) = \left\{ \begin{array}{cl}
        \bloss_{\lceil t/(d+1)\rceil}^\top\be_i & \text{if $t+s = 0 \quad (\mathrm{mod}\ d+1)$,}
    \\[1mm]
        0 & \text{otherwise,}
    \end{array} \right.
\]
where $\be_1,\dots,\be_K$ are the elements of the canonical basis of $\R^K$.
These components define the following composite loss incurred by any algorithm $A_d$ playing actions $I_1,I_2,\dots$
\[
    \lcomp_t(I_{t-d},\dots,I_t)
=
    \sum_{s=0}^{d} \loss_{t-s}^{(s)}(I_{t-s})
=
    \left\{ \begin{array}{cl}
        (d+1)\,\bloss_{\lceil t/(d+1)\rceil}^\top \bq_t & \text{if $t =0 \quad (\mathrm{mod}\ d+1)$,}
    \\[1mm]
        0 & \text{otherwise,}
    \end{array} \right.
\]
where $\bq_t$ is defined from $I_{t-d},\dots,I_t \in [K]$ as follows
\begin{equation}
\label{eq:p-reduction}
    \bq_t(j) = \frac{1}{d+1}\sum_{s=t-d}^t \I\{I_s = j\} \qquad j\in[K].
\end{equation}
Note that $\bq_t(i)$ is the fraction of times action $i$ was played by $A_d$ in the last $d+1$ rounds.
Given the algorithm $A_d$, we define the algorithm $A$ for playing linear bandits on the loss sequence $\bloss_1,\dots,\bloss_{T/(d+1)}$ as follows. If $t \neq 0 \  (\mathrm{mod}\ d+1)$, then $A$ skips the round. On the other hand, when $t=0 \ (\mathrm{mod}\ d+1)$, $A$ performs action $\bq_t$ defined in~(\ref{eq:p-reduction}), observes the loss $\bloss_{\lceil t/(d+1)\rceil}^\top\bq_t$, and returns to $A_d$ the composite loss $\lcomp_t(I_{t-d},\dots,I_t)$. Essentially, $A_d$ observes a nonzero composite loss only every $d$ time steps, when $t =0 \ (\mathrm{mod}\ d+1)$. When this happens, the composite loss of $A_d$ is $d\,\bloss_{\lceil t/(d+1)\rceil}^\top\bq_t$, which is $d+1$ times the loss of $A$.

Now it is enough to note that, using~(\ref{eq:lin-regret}),
\[
	\min_{k=1,\dots,K} \sum_{t=1}^T \lcomp_t(k,\dots,k)
=
	\min_{k=1,\dots,K} (d+1) \sum_{s=1}^{T/(d+1)} \bloss_s^\top \be_k
=
	\min_{\bq\in\Delta_K} (d+1) \sum_{s=1}^{T/(d+1)} \bloss_s^\top \bq~.
\]
This concludes the proof.
\end{proof}
We remark that our lower bound construction relies crucially on the power of the adversary to \emph{plan} the assignment of delays: losses of order $d$ are revealed only on $T/d$ time steps, leading to a multiplicative dependence on $d$.
This stacking effect is not possible in settings like the ones studied in \cite{pike2018bandits}, where delays are drawn i.i.d.\ over rounds and are, therefore, \emph{independently spread} across time steps.

We now prove a lower bound for linear bandits.

\medskip

\begin{proof}\textbf{of Lemma~\ref{l:shamir}}
The statement is essentially proven in \citep[Theorem~5]{shamir2015complexity}, where the author shows a $\Omega\big(\sqrt{K/T}\big)$ lower bound on the error of \textsl{bandit linear optimization} in the probability simplex.\footnote
{%
It is worth stressing that the lower bound in \citep{shamir2015complexity} is based on stochastic i.i.d.\ generation of losses, hence it does not violate our assumption about the obliviousness of the adversary.
}
As explained in \citep[Section~1.1]{shamir2015complexity}, (cumulative) regret lower bounds for linear bandits can be obtained by multiplying the lower bounds on bandit linear optimization error by $T$. A possible issue is that the proof in \citep[Theorem~5]{shamir2015complexity} uses unbounded Gaussian losses. However, in \citep[Appendix~B]{shamir2015complexity} it is shown how lower bounds for Gaussian losses can be converted into lower bounds for losses in $[-1,1]$ at the cost of a $1\big/\sqrt{\ln T}$ factor in the regret. Finally, note that our setting requires losses in $[0,1]$, but this is not an issue either because we are in a linear setting, and thus we can add the $(1,\dots,1)$ constant vector to all loss vectors without affecting the regret.
\end{proof}

\section{Issues in the Preliminary Version of the Paper}
\label{s:issues}

This paper is an extended and improved version of a preliminary paper appeared as \citep{cesa2018nonstochastic}.
There are two main issues in the preliminary version. 

Firstly, in \citet[last line of Eq. (11)]{cesa2018nonstochastic}, we find the inequality
        \[
            \E \lsb{ \sum_{s=0}^{d-1} \sum_{i:p_{t-s}(i) > p_{t-d+1}(i)} \lrb{ p_{t-s}(i) - p_{t-d+1}(i) } }
        \le
            \xi
        \]
    but, whenever an update occurred at round $t-d+2$, the same term is summed $\Theta(d)$ times (rather than $1$), leading to a bound of order $\Theta(d\xi)$ rather than the claimed $\Theta(\xi)$, and a consequently looser upper bound for the performance of the wrapper.
    
Secondly, in \citet[first line of Eq. (10)]{cesa2018nonstochastic}, we find the inequality
        \[
            \E \lsb{ \sum_{t\in \cU, t\ge 2d-2} \Delta_t^k }
        \ge
            q\brb{ 1 - q(2d -1) } \sum_{t=2d-2}^T \E[\Delta_t^k]
        \]
    which would follow from the provided discussion on $\Pb'\lrb{ \bigwedge_{s=1}^{2d-1} (t-s\notin\cU) }$ \emph{if $\Delta_t^k$ were non-negative}, but this is not necessarily the case.

The present version proposes a different wrapper and patch both things up, as described below.

When analyzing the term corresponding to the first issue in \citep{cesa2018nonstochastic}, we take a different route. 
We begin with a change of variables and never upper bound the losses $\ell_{t-s}^{(s)}$ with $1$, relying instead on Lemma~\ref{l:lemmino} to obtain the correct dependence on $d$.
This can be seen in the upper bound of $\uno$ in the proof of \Cref{t:regret-CoLoWr}.

For the second issue, the problem with the original wrapper in \citep{cesa2018nonstochastic} is to disentangle $\Delta_t^k$ from the random variable $\I\{t \text{ is an update round}\}$.
There is, however, an easier way to get around this roadblock, taking a slightly different route and relying on a different definition of draw, stay, and update rounds (\`a la \cite{dekel2014blinded}, see \Cref{d:update-rounds} and the subsequent discussion). 
This greatly simplifies the analysis as can be seen in the upper bound of $\due$ in the proof of \Cref{t:regret-CoLoWr}.

\vskip 0.2in
\bibliography{biblio}

\begin{thebibliography}{35}
\providecommand{\natexlab}[1]{#1}
\providecommand{\url}[1]{\texttt{#1}}
\expandafter\ifx\csname urlstyle\endcsname\relax
  \providecommand{\doi}[1]{doi: #1}\else
  \providecommand{\doi}{doi: \begingroup \urlstyle{rm}\Url}\fi

\bibitem[Abernethy et~al.(2015)Abernethy, Lee, and
  Tewari]{abernethy2015fighting}
J.~D. Abernethy, C.~Lee, and A.~Tewari.
\newblock Fighting bandits with a new kind of smoothness.
\newblock In C.~Cortes, N.~Lawrence, D.~Lee, M.~Sugiyama, and R.~Garnett,
  editors, \emph{Advances in Neural Information Processing Systems}, volume~28.
  Curran Associates, Inc., 2015.

\bibitem[Agrawal and Tulabandula(2020)]{agrawal2020learning}
P.~Agrawal and T.~Tulabandula.
\newblock Learning by repetition: Stochastic multi-armed bandits under priming
  effect.
\newblock In \emph{Conference on Uncertainty in Artificial Intelligence}, pages
  470--479. PMLR, 2020.

\bibitem[Anava et~al.(2015)Anava, Hazan, and Mannor]{ahm15}
O.~Anava, E.~Hazan, and S.~Mannor.
\newblock Online learning for adversaries with memory: price of past mistakes.
\newblock In \emph{Advances in Neural Information Processing Systems}, pages
  784--792, 2015.

\bibitem[Arora et~al.(2012)Arora, Dekel, and Tewari]{adt12}
R.~Arora, O.~Dekel, and A.~Tewari.
\newblock Online bandit learning against an adaptive adversary: from regret to
  policy regret.
\newblock In \emph{Proc. 29th ICML}, 2012.

\bibitem[Auer et~al.(2002)Auer, Cesa-Bianchi, Freund, and
  Schapire]{AuerCeFrSc02}
P.~Auer, N.~Cesa-Bianchi, Y.~Freund, and R.~E. Schapire.
\newblock The nonstochastic multiarmed bandit problem.
\newblock \emph{SIAM Journal on Computing}, 32\penalty0 (1):\penalty0 48--77,
  2002.

\bibitem[Cesa-Bianchi et~al.(2016)Cesa-Bianchi, Gentile, Mansour, and
  Minora]{cgmm16}
N.~Cesa-Bianchi, C.~Gentile, Y.~Mansour, and A.~Minora.
\newblock Delay and cooperation in nonstochastic bandits.
\newblock In \emph{Conference on Learning Theory}, pages 605--622, 2016.

\bibitem[Cesa-Bianchi et~al.(2018)Cesa-Bianchi, Gentile, and
  Mansour]{cesa2018nonstochastic}
N.~Cesa-Bianchi, C.~Gentile, and Y.~Mansour.
\newblock Nonstochastic bandits with composite anonymous feedback.
\newblock In \emph{Conference On Learning Theory}, pages 750--773. PMLR, 2018.

\bibitem[Dekel et~al.(2014{\natexlab{a}})Dekel, Ding, Koren, and Peres]{ddkp14}
O.~Dekel, J.~Ding, T.~Koren, and Y.~Peres.
\newblock Online learning with composite loss functions.
\newblock In \emph{Conference on Learning Theory}, pages 1214--1231,
  2014{\natexlab{a}}.

\bibitem[Dekel et~al.(2014{\natexlab{b}})Dekel, Hazan, and
  Koren]{dekel2014blinded}
O.~Dekel, E.~Hazan, and T.~Koren.
\newblock The blinded bandit: Learning with adaptive feedback.
\newblock \emph{Advances in Neural Information Processing Systems},
  27:\penalty0 1610--1618, 2014{\natexlab{b}}.

\bibitem[Dekel et~al.(2014{\natexlab{c}})Dekel, Hazan, and Koren]{dhk14}
O.~Dekel, E.~Hazan, and T.~Koren.
\newblock The blinded bandit: Learning with adaptive feedback.
\newblock In \emph{Advances in Neural Information Processing Systems}, pages
  1610--1618, 2014{\natexlab{c}}.

\bibitem[Gael et~al.(2020)Gael, Vernade, Carpentier, and
  Valko]{gael2020stochastic}
M.~A. Gael, C.~Vernade, A.~Carpentier, and M.~Valko.
\newblock Stochastic bandits with arm-dependent delays.
\newblock In \emph{International Conference on Machine Learning}, pages
  3348--3356. PMLR, 2020.

\bibitem[Garrabrant et~al.(2016)Garrabrant, Soares, and
  Taylor]{garrabrant2016asymptotic}
S.~Garrabrant, N.~Soares, and J.~Taylor.
\newblock Asymptotic convergence in online learning with unbounded delays.
\newblock \emph{arXiv preprint arXiv:1604.05280}, 2016.

\bibitem[Ito et~al.(2020)Ito, Hatano, Sumita, Takemura, Fukunaga, Kakimura, and
  Kawarabayashi]{ito2020delay}
S.~Ito, D.~Hatano, H.~Sumita, K.~Takemura, T.~Fukunaga, N.~Kakimura, and K.-I.
  Kawarabayashi.
\newblock Delay and cooperation in nonstochastic linear bandits.
\newblock \emph{Advances in Neural Information Processing Systems},
  33:\penalty0 4872--4883, 2020.

\bibitem[Joulani et~al.(2013)Joulani, Gyorgy, and
  Szepesv{\'a}ri]{joulani2013online}
P.~Joulani, A.~Gyorgy, and C.~Szepesv{\'a}ri.
\newblock Online learning under delayed feedback.
\newblock In \emph{International Conference on Machine Learning}, pages
  1453--1461, 2013.

\bibitem[Joulani et~al.(2016)Joulani, Gy{\"o}rgy, and
  Szepesv{\'a}ri]{joulani2016delay}
P.~Joulani, A.~Gy{\"o}rgy, and C.~Szepesv{\'a}ri.
\newblock Delay-tolerant online convex optimization: Unified analysis and
  adaptive-gradient algorithms.
\newblock In \emph{AAAI}, volume~16, pages 1744--1750, 2016.

\bibitem[Khashabi et~al.(2016)Khashabi, Quanrud, and
  Taghvaei]{khashabi2016adversarial}
D.~Khashabi, K.~Quanrud, and A.~Taghvaei.
\newblock Adversarial delays in online strongly-convex optimization.
\newblock \emph{arXiv preprint arXiv:1605.06201}, 2016.

\bibitem[Langford et~al.(2009)Langford, Smola, and Zinkevich]{langford2009slow}
J.~Langford, A.~J. Smola, and M.~Zinkevich.
\newblock Slow learners are fast.
\newblock \emph{Advances in Neural Information Processing Systems},
  22:\penalty0 2331--2339, 2009.

\bibitem[Li et~al.(2019)Li, Chen, and Giannakis]{li2019bandit}
B.~Li, T.~Chen, and G.~B. Giannakis.
\newblock Bandit online learning with unknown delays.
\newblock In \emph{The 22nd International Conference on Artificial Intelligence
  and Statistics}, pages 993--1002. PMLR, 2019.

\bibitem[Mandel et~al.(2015)Mandel, Liu, Brunskill, and
  Popovic]{mandel2015queue}
T.~Mandel, Y.-E. Liu, E.~Brunskill, and Z.~Popovic.
\newblock The queue method: Handling delay, heuristics, prior data, and
  evaluation in bandits.
\newblock In \emph{AAAI}, pages 2849--2856, 2015.

\bibitem[Mann et~al.(2019)Mann, Gowal, Gyorgy, Hu, Jiang, Lakshminarayanan, and
  Srinivasan]{mann2019learning}
T.~A. Mann, S.~Gowal, A.~Gyorgy, H.~Hu, R.~Jiang, B.~Lakshminarayanan, and
  P.~Srinivasan.
\newblock Learning from delayed outcomes via proxies with applications to
  recommender systems.
\newblock In \emph{International Conference on Machine Learning}, pages
  4324--4332. PMLR, 2019.

\bibitem[Mesterharm(2005)]{mesterharm2005line}
C.~Mesterharm.
\newblock On-line learning with delayed label feedback.
\newblock In \emph{Algorithmic Learning Theory}, pages 399--413. Springer,
  2005.

\bibitem[Neu et~al.(2010)Neu, Antos, Gy\"{o}rgy, and
  Szepesv\'{a}ri]{neu2010online}
G.~Neu, A.~Antos, A.~Gy\"{o}rgy, and C.~Szepesv\'{a}ri.
\newblock Online {M}arkov decision processes under bandit feedback.
\newblock In \emph{Advances in Neural Information Processing Systems 23}, pages
  1804--1812. Curran Associates, Inc., 2010.

\bibitem[Pike-Burke et~al.(2018)Pike-Burke, Agrawal, Szepesvari, and
  Grunewalder]{pike2018bandits}
C.~Pike-Burke, S.~Agrawal, C.~Szepesvari, and S.~Grunewalder.
\newblock Bandits with delayed, aggregated anonymous feedback.
\newblock In J.~Dy and A.~Krause, editors, \emph{Proceedings of the 35th
  International Conference on Machine Learning}, volume~80 of \emph{Proceedings
  of Machine Learning Research}, pages 4105--4113. PMLR, 10--15 Jul 2018.

\bibitem[Quanrud and Khashabi(2015)]{quanrud2015online}
K.~Quanrud and D.~Khashabi.
\newblock Online learning with adversarial delays.
\newblock In \emph{Advances in Neural Information Processing Systems}, pages
  1270--1278, 2015.

\bibitem[Reem et~al.(2019)Reem, Reich, and Pierro]{response}
D.~Reem, S.~Reich, and A.~D. Pierro.
\newblock Re-examination of bregman functions and new properties of their
  divergences.
\newblock \emph{Optimization}, 68\penalty0 (1):\penalty0 279--348, 2019.

\bibitem[Shalev-Shwartz(2012)]{shalev2012online}
S.~Shalev-Shwartz.
\newblock Online learning and online convex optimization.
\newblock \emph{Foundations and Trends{\textregistered} in Machine Learning},
  4\penalty0 (2):\penalty0 107--194, 2012.

\bibitem[Shamir(2015)]{shamir2015complexity}
O.~Shamir.
\newblock On the complexity of bandit linear optimization.
\newblock In \emph{Conference on Learning Theory}, pages 1523--1551, 2015.

\bibitem[Shamir and Szlak(2017)]{shamir2017online}
O.~Shamir and L.~Szlak.
\newblock Online learning with local permutations and delayed feedback.
\newblock In \emph{Proc. 34th ICML}, 2017.

\bibitem[Thune et~al.(2019)Thune, Cesa-Bianchi, and
  Seldin]{thune2019nonstochastic}
T.~S. Thune, N.~Cesa-Bianchi, and Y.~Seldin.
\newblock Nonstochastic multiarmed bandits with unrestricted delays.
\newblock In \emph{Proceedings of the 33rd International Conference on Neural
  Information Processing Systems}, pages 6541--6550, 2019.

\bibitem[Vernade et~al.(2017)Vernade, Capp{\'{e}}, and Perchet]{VernadeCP17}
C.~Vernade, O.~Capp{\'{e}}, and V.~Perchet.
\newblock Stochastic bandit models for delayed conversions.
\newblock In \emph{Proceedings of the Thirty-Third Conference on Uncertainty in
  Artificial Intelligence, {UAI}}, 2017.

\bibitem[Vernade et~al.(2020)Vernade, Carpentier, Lattimore, Zappella, Ermis,
  and Brueckner]{vernade2020linear}
C.~Vernade, A.~Carpentier, T.~Lattimore, G.~Zappella, B.~Ermis, and
  M.~Brueckner.
\newblock Linear bandits with stochastic delayed feedback.
\newblock In \emph{International Conference on Machine Learning}, pages
  9712--9721. PMLR, 2020.

\bibitem[Wang et~al.(2021)Wang, Wang, and Huang]{wang2021adaptive}
S.~Wang, H.~Wang, and L.~Huang.
\newblock Adaptive algorithms for multi-armed bandit with composite and
  anonymous feedback.
\newblock In \emph{Proceedings of the AAAI Conference on Artificial
  Intelligence}, volume~35, pages 10210--10217, 2021.

\bibitem[Weinberger and Ordentlich(2002)]{weinberger2002delayed}
M.~J. Weinberger and E.~Ordentlich.
\newblock On delayed prediction of individual sequences.
\newblock \emph{IEEE Transactions on Information Theory}, 48\penalty0
  (7):\penalty0 1959--1976, 2002.

\bibitem[Zimmert and Seldin(2020)]{zimmert2020optimal}
J.~Zimmert and Y.~Seldin.
\newblock An optimal algorithm for adversarial bandits with arbitrary delays.
\newblock In \emph{International Conference on Artificial Intelligence and
  Statistics}, pages 3285--3294. PMLR, 2020.

\bibitem[Zimmert and Seldin(2021)]{zimmert2021tsallis}
J.~Zimmert and Y.~Seldin.
\newblock Tsallis-inf: An optimal algorithm for stochastic and adversarial
  bandits.
\newblock \emph{J. Mach. Learn. Res.}, 22:\penalty0 28--1, 2021.

\end{thebibliography}

\end{document}